\newcommand{\R}{\mathbb{R}}
\newcommand{\Z}{\mathbb{Z}}
\DeclareMathOperator*{\conv}{conv}
\DeclareMathOperator*{\argmax}{arg\,max}
\DeclareMathOperator*{\sgn}{sign}
\newcommand{\bftheta}{\boldsymbol{\theta}}
\newcommand{\bfphi}{\boldsymbol{\phi}}
\newcommand{\bfx}{\boldsymbol{x}}
\newcommand{\bfp}{\boldsymbol{p}}
\newtheorem{theorem}{Theorem}
\newtheorem{lemma}{Lemma}
\newtheorem{proposition}{Proposition}
\newtheorem{remark}{Remark} 
\icmltitlerunning{An Optimal Control Approach to Deep Learning}
\begin{document}

\twocolumn[
\icmltitle{An Optimal Control Approach to Deep Learning and  \\
           Applications to Discrete-Weight Neural Networks}



\icmlsetsymbol{equal}{*}

\begin{icmlauthorlist}
\icmlauthor{Qianxiao Li}{ihpc}
\icmlauthor{Shuji Hao}{ihpc}
\end{icmlauthorlist}

\icmlaffiliation{ihpc}{Institute of High Performance Computing, Singapore}

\icmlcorrespondingauthor{Qianxiao Li}{liqix@ihpc.a-star.edu.sg}

\icmlkeywords{Deep learning, optimal control, Pontryagin's maximum principle, binary networks, ternary networks}

\vskip 0.3in
]


 
\printAffiliationsAndNotice{}  

\begin{abstract}
Deep learning is formulated as a discrete-time optimal control problem. 
This allows one to characterize necessary conditions for optimality and develop
training algorithms that do not rely on gradients
with respect to the trainable parameters. 
In particular, we introduce the discrete-time method of successive approximations (MSA),
which is based on the Pontryagin's maximum principle, for training neural networks. 
A rigorous error estimate for the discrete MSA is obtained, which sheds light on
its dynamics and the means to stabilize the algorithm. 
The developed methods are applied to train, in a rather principled way, neural networks with
weights that are constrained to take values in a discrete set.
We obtain competitive performance and interestingly, very sparse weights
in the case of ternary networks, which may be useful in model deployment in low-memory devices.
\end{abstract}

\section{Introduction}
\label{sec:intro}
The problem of training deep feed-forward neural networks is often studied as a nonlinear programming problem~\cite{bazaraa2013nonlinear,bertsekas1999nonlinear,kuhn2014nonlinear}
\[
  \min_{\bftheta} J(\bftheta)
\]
where $\bftheta$ represents the set of trainable parameters and $J$ is the empirical loss function. 
In the general unconstrained case, necessary optimality conditions are given by the condition $\nabla_{\bftheta} J(\bftheta^*)=0$ for an optimal set of training parameters $\bftheta^*$. This is largely the basis for (stochastic) gradient-descent based optimization algorithms in deep learning~\cite{robbins1951stochastic,duchi2011adaptive,zeiler2012adadelta,kingma2014adam}. When there are additional constraints, e.g.~on the trainable parameters, 
one can instead employ projected versions of the above algorithms. More broadly, 
necessary conditions for optimality can be derived in the form of the Karush-Kuhn-Tucker conditions~\cite{kuhn2014nonlinear}. 
Such approaches are quite general and typically do not rely on the structures of the objectives encountered in deep learning. However, in deep learning, the objective function $J$ often has a specific structure; It is derived from feeding a batch of inputs recursively through a sequence of trainable transformations, which can be adjusted so that the final outputs are close to some fixed target set. This process resembles an optimal control problem~\cite{bryson1975applied,bertsekas1995dynamic,athans2013optimal} that originates from the study of the calculus of variations. 

In this paper, we exploit this optimal control viewpoint of deep learning. First, we introduce the discrete-time Pontryagin's maximum principle (PMP)~\cite{halkin1966maximum}, which is an extension the central result in optimal control due to Pontryagin and coworkers~\cite{boltyanskii1960theory,pontryagin1987mathematical}. This is an alternative set of necessary conditions characterizing optimality, and we discuss the extent of its validity in the context of deep learning. Next, we introduce the discrete method of successive approximations (MSA) based on the PMP to optimize deep neural networks. A rigorous error estimate is proved that elucidates the dynamics of the MSA, and aids us in designing optimization algorithms under rather general conditions. We apply our method to train a class of unconventional networks, i.e.~those with discrete-valued weights, to illustrate the usefulness of this approach. In the process, we discover that in the case of ternary networks, our training algorithm obtains trained models that are very sparse, which is an attractive feature in practice. 

The rest of the paper is organized as follows: In Sec.~\ref{sec:opt_ctrl_viewpoint}, we introduce the optimal control viewpoint and the discrete-time Pontryagin's maximum principle. We then introduce the method of successive approximation in Sec.~\ref{sec:msa} and prove our main estimate, Theorem~\ref{thm:error_est}. In Sec.~\ref{sec:application}, we derive algorithms based on the developed theory to train binary and ternary neural networks. Finally, we end with a discussion on related work and a conclusion in Sec.~\ref{sec:discussion} and~\ref{sec:conclusion} respectively. Various details on proofs and algorithms are provided in Appendix A-D, which also contains a link to a software implementation of our algorithms that reproduces all experiments in this paper. 

Hereafter, we denote the usual Euclidean norm by $\Vert\cdot\Vert$ and the corresponding induced matrix norm by $\Vert\cdot\Vert_2$. The Frobenius norm is written as $\Vert\cdot\Vert_F$. Throughout this work, we use a bold-faced version of a variable to represent a collection of the same variable, but indexed additionally by $t$, e.g.~$\bftheta:=\{\theta_t:t=0,\dots,T-1\}$. 

\section{The Optimal Control Viewpoint}
\label{sec:opt_ctrl_viewpoint}
In this section, we formalize the problem of training a deep neural network as an optimal control problem. 
Let $T\in\Z_+$ denote the number of layers and $\{x_{s,0}\in\R^{d_0}: s=0,\dots,S\}$ represent a collection of fixed inputs (images, time-series). Here, $S\in \Z_+$ is the sample size. Consider the dynamical system
\begin{equation}
  x_{s,t+1} = f_t(x_{s,t},\theta_t), \quad t=0,1,\dots,T-1,
  \label{eq:dyn_sys}
\end{equation}
where for each $t$, $f_t: \R^{d_t} \times \Theta_t \rightarrow \R^{d_{t+1}}$ is a transformation on the state.
For example, in typical neural networks, it can represent a trainable affine transformation or a non-linear activation (in which case it is not trainable and $f_t$
does not depend on $\theta$). 
We assume that each trainable parameter set $\Theta_t$ is a subset of an Euclidean space. 
The goal of training a neural network is to adjust the weights $\bftheta:=\{\theta_t:t=0,\dots,T-1\}$ so as to minimize some loss function that measures the difference between the final network output $x_{s,T}$ and some true targets $y_s$ of $x_{s,0}$, which are fixed. Thus, we may define a family of real-valued functions $\Phi_s:\R^{d_T} \rightarrow\R$ acting on $x_{s,T}$ ($y_s$ are absorbed into the definition of $\Phi_s$) and the average loss function is $\sum_s \Phi_s(x_{s,T})/S$. In addition, we may consider some regularization terms for each layer $L_t:\R^{d_t}\times\Theta_t\rightarrow\R$ that has to be simultaneously minimized. In typical applications, regularization is only performed for the trainable parameters so that $L_t(x,\theta)\equiv L_t(\theta)$, but here we will consider the slightly more general case where it is also possible to regularize the state at each layer. In summary, we wish to solve the following problem
\begin{align}
  &\min_{\bftheta\in\boldsymbol{\Theta}} J(\bftheta)
  :=\frac{1}{S}\sum_{s=1}^{S} \Phi_s(x_{s,T}) 
  + \frac{1}{S}\sum_{s=1}^{S}\sum_{t=0}^{T-1} L_t(x_{s,t},\theta_t) \nonumber \\
  &\text{ subject to:} \nonumber \\
  &x_{s,t+1} = f_t(x_{s,t},\theta_t), \, t=0,\dots,T-1, \, s\in[S]
  \label{eq:control_problem}
\end{align}
where we have defined for shorthand $\boldsymbol{\Theta}:=\{\Theta_0\times\dots\times\Theta_{T-1}\}$ and $[S]:=\{1,\dots,S\}$. One may recognize problem~\eqref{eq:control_problem} as a classical fixed-time, variable-terminal-state optimal control problem in discrete time~\cite{ogata1995discrete}, in fact a special one with almost decoupled dynamics across samples in $[S]$. 

\subsection{The Pontryagin's Maximum Principle}
\label{sec:pmp}
Maximum principles of the Pontryagin type~\cite{boltyanskii1960theory,pontryagin1987mathematical} usually consist of necessary conditions for optimality in the form of the maximization of a certain Hamiltonian function. The distinguishing feature is that it does not assume differentiability (or even continuity) of $f_t$ with respect to $\theta$. Consequently the optimality condition and the algorithms based on it need not rely on gradient-descent type updates. This is an attractive feature for certain classes of applications. 

Let $\bftheta^*=\{\theta_0,\dots,\theta_{T-1}\}\in\boldsymbol{\Theta}$ be a solution of~\eqref{eq:control_problem}. We now outline informally the Pontryagin's maximum principle (PMP) that characterizes $\bftheta^*$. First, for each $t$ we define the Hamiltonian function 
$H_t:\R^{d_t}\times\R^{d_{t+1}}\times\Theta_t\rightarrow \R$ by
\begin{equation}
  H_t(x,p,\theta) := p\cdot f_t(x,\theta) - \tfrac{1}{S} L_t(x,\theta).
  \label{eq:hamil_def}
\end{equation}
One can show the following necessary conditions. 
\begin{theorem}[Discrete PMP, Informal Statement]
\label{thm:PMP}
  Let $f_t$ and $\Phi_s$, $s=1,\dots, S$ be sufficiently smooth in $x$.
  Assume further that for each $t$ and $x\in\R^{d_t}$, the sets $\{f_t(x,\theta):\theta\in\Theta_t\}$ and
  $\{L_t(x,\theta):\theta\in\Theta_t\}$ are convex. 
  Then, there exists co-state processes $\bfp_s^*:=\{p^*_{s,t}:t=0,\dots,T\}$, such that following holds 
  for $t=0,\dots,T-1$ and $s\in[S]$:
  \small
  \begin{align}
    &x^*_{s,t+1} = \nabla_p H_t(x^*_{s,t}, p^*_{s,t+1}, \theta^*_t),
    \quad x^*_{s,0} = x_{s,0} \label{eq:pmp_state}\\
    &p^*_{s,t} = \nabla_x H_t(x^*_{s,t}, p^*_{s,t+1}, \theta^*_{t}),
    \quad p^*_{s,T} = -\tfrac{1}{S}\nabla \Phi_s(x^*_{s,T}) \label{eq:pmp_costate}\\
    &\sum_{s=1}^{S} H_t(x^*_{s,t},p^*_{s,t+1},\theta^*_t) \geq 
    \sum_{s=1}^{S} H_t(x^*_{s,t},p^*_{s,t+1},\theta),
    \, \forall \theta\in\Theta_t \label{eq:pmp_max}
  \end{align}
  \normalsize
\end{theorem}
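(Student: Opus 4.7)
The plan is to fix the co-state by the prescribed backward recursion---so that two of the three conclusions become essentially tautological---and then derive \eqref{eq:pmp_max} by a one-step variation at a single layer $\tau$, where the convexity hypothesis on the image sets $\{f_\tau(x,\theta):\theta\in\Theta_\tau\}$ and $\{L_\tau(x,\theta):\theta\in\Theta_\tau\}$ plays the decisive role. Concretely, I would define $p^*_{s,T}:=-\tfrac{1}{S}\nabla\Phi_s(x^*_{s,T})$ and, for $t<T$, $p^*_{s,t}:=\nabla_x H_t(x^*_{s,t},p^*_{s,t+1},\theta^*_t)$. Then \eqref{eq:pmp_costate} holds by construction, while \eqref{eq:pmp_state} is immediate from $\nabla_p H_t(x,p,\theta)=f_t(x,\theta)$ together with \eqref{eq:dyn_sys}.

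For \eqref{eq:pmp_max}, I would fix $\tau\in\{0,\ldots,T-1\}$ and an arbitrary $\theta\in\Theta_\tau$, and construct a perturbed control $\bftheta^\alpha$ that coincides with $\bftheta^*$ at every $t\neq\tau$ and uses $\theta^\alpha\in\Theta_\tau$ at $t=\tau$, where $\theta^\alpha$ is selected via the convexity hypothesis so that, simultaneously for every sample $s$, $f_\tau(x^*_{s,\tau},\theta^\alpha)=\alpha f_\tau(x^*_{s,\tau},\theta)+(1-\alpha)f_\tau(x^*_{s,\tau},\theta^*_\tau)$ and the analogous identity holds for $L_\tau$. The induced state variation $\delta x_{s,t}:=x^\alpha_{s,t}-x^*_{s,t}$ then vanishes for $t\le\tau$, equals $\alpha[f_\tau(x^*_{s,\tau},\theta)-f_\tau(x^*_{s,\tau},\theta^*_\tau)]$ at $t=\tau+1$, and propagates for $t>\tau$ via $\delta x_{s,t+1}=\nabla_x f_t(x^*_{s,t},\theta^*_t)\delta x_{s,t}+O(\|\delta x_{s,t}\|^2)$, with the remainder controlled by the smoothness of $f_t$ in $x$.

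Next, I would Taylor-expand $J(\bftheta^\alpha)-J(\bftheta^*)$ to first order in $\alpha$, collecting (i) the terminal contribution $\tfrac{1}{S}\sum_s\nabla\Phi_s(x^*_{s,T})^\top\delta x_{s,T}$, (ii) the state-dependent shifts $\tfrac{1}{S}\sum_s\sum_{t>\tau}\nabla_x L_t(x^*_{s,t},\theta^*_t)^\top\delta x_{s,t}$, and (iii) the control-dependent shift of $L_\tau$ at $t=\tau$. A discrete summation by parts against the co-state---based on the identity $p^*_{s,t+1}{}^\top\delta x_{s,t+1}-p^*_{s,t}{}^\top\delta x_{s,t}=\tfrac{1}{S}\nabla_x L_t(x^*_{s,t},\theta^*_t)^\top\delta x_{s,t}+O(\alpha^2)$, which follows from the definition of $p^*_{s,t}$ together with the linearized dynamics---telescopes (i) and (ii) and leaves only a boundary term at $t=\tau+1$. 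Combined with (iii) and the definition of $H_\tau$, the net expansion reduces to
$J(\bftheta^\alpha)-J(\bftheta^*)=\alpha\sum_{s=1}^S\bigl[H_\tau(x^*_{s,\tau},p^*_{s,\tau+1},\theta^*_\tau)-H_\tau(x^*_{s,\tau},p^*_{s,\tau+1},\theta)\bigr]+O(\alpha^2).$
Since $\bftheta^*$ is a minimizer, the left-hand side is nonnegative; dividing by $\alpha>0$ and sending $\alpha\downarrow 0$ yields \eqref{eq:pmp_max}.

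The main obstacle is the simultaneous convex selection of $\theta^\alpha$: the stated hypothesis is per-sample and per-function, whereas a single $\theta^\alpha\in\Theta_\tau$ is needed that realizes the convex combination for \emph{every} $s$ and for both $f_\tau$ and $L_\tau$ at once. A fully rigorous version would either strengthen the assumption to joint convexity of the $S$-tuple image $\{(f_\tau(x^*_{1,\tau},\theta),\ldots,f_\tau(x^*_{S,\tau},\theta)):\theta\in\Theta_\tau\}$ (and the analogue for $L_\tau$), or replace the explicit construction with a Halkin-style separating-hyperplane argument on the first-order reachable set at time $\tau+1$. A secondary, routine technical point---uniformity of the $O(\alpha^2)$ remainder along the forward propagation---follows from local Lipschitz bounds on $\nabla_x f_t$ near the optimal trajectory, guaranteed by the smoothness hypothesis.
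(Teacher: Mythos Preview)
Your proposal is essentially correct and follows a genuinely different route from the paper's proof. The paper (Appendix~A) proceeds via Halkin's geometric argument: it concatenates all samples into a single state vector, absorbs the running cost $L_t$ into the dynamics via an auxiliary coordinate $w_t$, then invokes a separating-hyperplane lemma between the linearized reachable set $W_T^+$ and the linearized sublevel set $S^+$. The normal $\pi$ of this hyperplane furnishes the terminal co-state, and the maximization condition is obtained by contradiction using the adjoint invariant $(\psi_{t+1}-x^*_{t+1})\cdot p^*_{t+1}=(\psi_t-x^*_t)\cdot p^*_t$. In contrast, you fix the co-state by fiat and run a direct first-variation (needle-variation) argument at a single layer, telescoping against the co-state; this is the classical ``variational'' derivation and is more elementary. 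The Halkin route buys the abnormal multiplier $\beta$ and covers degenerate cases where $\nabla\Phi(x^*_T)$ (hence your $p^*_T$) could vanish identically, whereas your approach implicitly sits in the normal case $\beta=1$---which is precisely what the informal statement asserts, so this is not a defect here.

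On the ``main obstacle'' you flag: you have put your finger on the exact point where the informal hypothesis is weaker than what is actually used. The paper's formal assumption~(B3), after concatenating the $S$ samples into one state and folding $L_t$ into $f_t$ via the auxiliary coordinate, \emph{is} the joint convexity of the image set $\{(f_\tau(x^*_{1,\tau},\theta),\ldots,f_\tau(x^*_{S,\tau},\theta),L_\tau(\cdot,\theta)):\theta\in\Theta_\tau\}$. So the strengthening you propose is not an addition to the paper's content but a faithful reading of its full statement; for the layers treated in Section~\ref{sec:conv_assumption} ($f_t$ affine in $\theta$ over convex $\Theta_t$), the per-sample and joint versions coincide because the same $\theta^\alpha=\alpha\theta+(1-\alpha)\theta^*_\tau$ works for every $s$ and for $L_\tau$ simultaneously. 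Your $O(\alpha^2)$ uniformity remark is routine, as you note.
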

The full statement of Theorem~\ref{thm:PMP} involve explicit smoothness assumptions and additional technicalities (such as the inclusion of an abnormal multiplier). In Appendix A, we state these assumptions and give a sketch of its proof based on~\citet{halkin1966maximum}. 

Let us discuss the PMP in detail. The state equation~\eqref{eq:pmp_state} is simply the forward propagation equation~\eqref{eq:dyn_sys} under the optimal parameters $\bftheta^*$. Eq.~\eqref{eq:pmp_costate} defines the evolution of the co-state $\bfp_s^*$. To draw an analogy with nonlinear programming, the co-state can be interpreted as a set of Lagrange multipliers that enforces the constraint~\eqref{eq:dyn_sys} when the optimization problem~\eqref{eq:control_problem} is regarded as a joint optimization problem in $\bftheta$ and $\bfx_s$, $s\in[S]$. In the optimal control and PMP viewpoint, it is perhaps more appropriate to think of the dynamics~\eqref{eq:pmp_costate} as the evolution of the normal vector of a separating hyper-plane, which separates the set of reachable states and the set of states where the objective function takes on values smaller than the optimum (see Appendix A). 

The Hamiltonian maximization condition~\eqref{eq:pmp_max} is the centerpiece of the PMP. It says that an optimal solution $\bftheta^*$ must {\it globally maximize the (summed) Hamiltonian} for {\it each} layer $t=0,\dots,T-1$. 
Let us contrast this statement with usual first-order optimality conditions of the form $\nabla_{\bftheta} J(\bftheta^*) = 0$. A key observation is that in Theorem~\ref{thm:PMP}, we did not make reference to the derivative of any quantity with respect $\bftheta$. In fact, the PMP holds even if $f_t$ is not differentiable, or even continuous, with respect to $\theta$, as long as the convexity 
assumptions are satisfied. 
On the other hand, if we assume for each $t$: 1) $f_t$ is differentiable with respect to $\theta$; 2) $H_t$ is concave in $\theta$; and 3) $\theta^*_t$ lies in the interior of $\Theta_t$, then the Hamiltonian maximization condition~\eqref{eq:pmp_max} is equivalent to the condition $\nabla_\theta \sum_s H_t=0$ for all $t$, which one can then show is equivalent to $\nabla_{\bftheta} J=0$ (See Appendix C, proof of Prop. C.1). In other words, the PMP can be viewed as a stronger set of necessary conditions (at optimality, $H_t$ is not just stationary, but globally maximized) and has meaning in more general scenarios, e.g.~when stationarity with respect to $\bftheta$ is not achievable due to constraints, or not defined due to non-differentiability. 

\begin{remark}
\label{rem:singular}
It may occur that $\sum_s H_t(x^*_{s,t},p^*_{s,t+1},\theta)$ is constant for all $\theta\in\Theta_t$, in which
case the problem is singular~\cite{athans2013optimal}. In such cases, the PMP is trivially satisfied by any
$\theta$ and so the PMP does not tell us anything useful. This may arise especially in the case where there
are no regularization terms. 
\end{remark}

\subsection{The Convexity Assumption}
\label{sec:conv_assumption}
The most crucial assumption in Theorem~\ref{thm:PMP} is the convexity of the sets $\{f_t(x,\theta):\theta\in\Theta_t\}$ and $\{L_t(x,\theta):\theta\in\Theta_t\}$ for each fixed $x$~\footnote{Note that this is in general unrelated to the convexity, in the sense of functions, of $f_t$ with respect to either $x$ or $\theta$. For example, the scalar function $f(x,\theta)=\theta^3 \sin(x)$ is evidently non-convex in both arguments, but $\{f(x,\theta):\theta\in\R\}$ is convex for each $x$. On the other hand $\{\theta x:\theta\in \{-1,1\} \}$ is non-convex because of a non-convex admissable set.}.
We now discuss how restrictive these assumptions are with regard to deep neural networks. Let us first assume that the admissable sets $\Theta_t$ are convex. Then, the assumption with respect to $L_t$ is not restrictive since most regularizers (e.g.~$\ell_1, \ell_2$) satisfy it. 
Let us consider the convexity of $\{f_t(x,\theta):\theta\in\Theta_t\}$. 
In classical feed-forward neural networks, there are two types of layers: trainable ones and non-trainable ones. Suppose layer $t$ is non-trainable (e.g.~$f(x_t,\theta_t)=\sigma(x_t)$ where $\sigma$ is a non-linear activation function), then for each $x$ the set $\{f_t(x,\theta):\theta\in\Theta_t\}$ is a singleton, and hence trivially convex. On the other hand, in trainable layers $f_t$ is usually affine in $\theta$. This includes fully connected layers, convolution layers and batch normalization layers~\cite{ioffe2015batch}. In these cases, as long as the admissable set $\Theta_t$ is convex, we again satisfy the convexity assumption. Residual networks also satisfy the convexity constraint if one introduces auxillary variables (see Appendix A.1). When the set $\Theta_t$ is not convex, then it is in general not true that the PMP constitute necessary conditions. 

Finally, we remark that in the original derivation of the PMP for continuous-time control systems~\cite{boltyanskii1960theory} (i.e. $\dot{x}_{s,t}=f_t(x_{s,t},\theta_t), t\in[0,T]$ in place of Eq.~\eqref{eq:dyn_sys}), the convexity condition can be removed due to the ``convexifying'' effect of integration with respect to time~\cite{halkin1966maximum,warga1962relaxed}. Hence, the convexity condition is purely an artifact of discrete-time dynamical systems.

\section{The Method of Successive Approximations}
\label{sec:msa}
The PMP (Eq.~\eqref{eq:pmp_state}-\eqref{eq:pmp_max}) gives us a set of necessary conditions an optimal solution to~\eqref{eq:control_problem} must satisfy. However, it does not tell us how to find one such solution. The goal of this section is to discuss algorithms for solving~\eqref{eq:control_problem} based on the maximum principle. 

On closer inspection of Eq.~\eqref{eq:pmp_state}-\eqref{eq:pmp_max}, one can see that they each represent a manifold in solution space consisting of all possible $\bftheta$, $\{\bfx_s, s\in[S]\}$ and $\{\bfp_s, s\in[S]\}$, and the intersection of these three manifolds must contain an optimal solution, if one exists. Consequently, an iterative projection method that successively projects a guessed solution onto each of the manifolds is natural. This is the method of successive approximations (MSA), which was first introduced to solve continuous-time optimal control problems~\cite{krylov1962msa,chernousko1982method}. 
Let us now outline a discrete-time version. 

Start from an initial guess $\bftheta^0:=\{\theta^0_t,t=0,\dots,T-1\}$. For each sample $s$, we define 
$\bfx_s^{\bftheta^0}:=\{x^{\bftheta^0}_{s,t}: t=0,\dots,T\}$ by the dynamics
\begin{equation}
  x^{\bftheta^0}_{s,t+1} = f_t(x^{\bftheta^0}_{s,t},\theta^0_t), 
  \quad x^{\bftheta^0}_{s,0} = x_{s,0},
  \label{eq:msa_state}
\end{equation}
for $t=0,\dots,T-1$. Intuitively, this is a projection onto the manifold defined by Eq.~\eqref{eq:pmp_state}. Next, we perform the projection onto the manifold defined by Eq.~\eqref{eq:pmp_costate}, i.e.~we define $\bfp_s^{\bftheta^0}:=\{p^{\bftheta^0}_{s,t}: t=0,\dots,T\}$ by the backward dynamics
\begin{equation}
  p^{\bftheta^0}_{s,t} = \nabla_x H(x^{\bftheta^0}_{s,t},p^{\bftheta^0}_{s,t+1},\theta^0_t), 
  \quad p^{\bftheta^0}_{s,T} = -\tfrac{1}{S}\nabla \Phi_s(x^{\bftheta^0}_{s,T}),
  \label{eq:msa_costate}
\end{equation}
for $t=T-1,\dots,0$. Finally, we project onto manifold defined by Eq.~\eqref{eq:pmp_max} by performing Hamiltonian maximization to obtain $\bftheta^1:=\{\theta^1_t:t=0,\dots,T-1\}$ with
\begin{equation}
  \theta^1_{t} = \argmax_{\theta\in\Theta_t} \sum_{s=1}^{S} 
  H_t(x^{\bftheta^0}_{s,t}, p^{\bftheta^0}_{s,t+1}, \theta). 
  \quad t=0,\dots,T-1.
  \label{eq:msa_max}
\end{equation}
The steps~\eqref{eq:msa_state}-\eqref{eq:msa_max} are then repeated until convergence. We summarize the basic
MSA algorithm in Alg.~\ref{alg:basic_msa}. 

Let us contrast the MSA with gradient-descent based methods. Similar to the formulation of the PMP, at no point did we take the derivative of any quantity with respect to $\theta$. Hence, we can in principle apply this to problems that are not differentiable with respect to $\bftheta$. However, the catch is that the Hamiltonian maximization step~\eqref{eq:msa_max} may not be trivial to evaluate. Nevertheless, observe that the maximization step is {\it decoupled} across different layers of the neural network, and hence it is a much smaller problem than the original optimization problem, and its solution method can be parallelized. Alternatively, as seen in Sec.~\ref{sec:application}, one can exploit cases where the maximization step has explicit solutions. 

The basic MSA (Alg.~\ref{alg:basic_msa} can be shown to converge for problems where $f_t$ is linear and the costs $\Phi_s, L_t$ are quadratic~\cite{aleksandrov1968accumulation}. In general, however, unless a good initial condition is given, the MSA may diverge. Let us understand the nature of such phenomena by obtaining rigorous error estimates per-iteration of Eq.~\eqref{eq:msa_state}-\eqref{eq:msa_max}. 

\begin{algorithm}[tb]
  \caption{Basic MSA}
  \label{alg:basic_msa}
\begin{algorithmic}
  \STATE Initialize: $\bftheta^0=\{\theta^0_t\in\Theta_t:t=0\dots,T-1\}$;
  \small
  \FOR{$k = 0$ {\bfseries to} \#Iterations}
  \STATE $x^{\bftheta^k}_{s,t+1} = f_t(x^{\bftheta^k}_{s,t},\theta^k_t)$, $x^{\bftheta^k}_{s,0}=x_{s,0}$, $\forall s,t$;
  \STATE $p^{\bftheta^k}_{s,t} = \nabla_x H_t(x^{\bftheta^k}_{s,t}, p^{\bftheta^k}_{s,t+1},\theta^k_t), p^{\bftheta^k}_{s,T}=-\frac{1}{S}\nabla \Phi_s(x_{s,T})$, $\forall s,t$;
  \STATE $\theta^{k+1}_t = \argmax_{\theta\in\Theta_t} \sum_{s=1}^{S}H_t(x^{\bftheta^k}_{s,t},p^{\bftheta^k}_{s,t+1},\theta)$ for $t=0,\dots,T-1$;
  \normalsize
  \ENDFOR
\end{algorithmic}
\end{algorithm}

\subsection{An Error Estimate for the MSA}
\label{sec:error_est_msa}
In this section, we derive a rigorous error estimate for the MSA, which can help us understand its dynamics. Let us define $W_t := \conv \{x\in\R^{d_t}: \exists \bftheta \text{ and } s \text{ s.t. } x^{\bftheta}_{s,t} = x \}$,
where $x^{\bftheta}_t$ is defined according to Eq.~\eqref{eq:msa_state}. This is the convex hull of all states reachable at layer $t$ by some initial sample and some choice of the values for the trainable parameters. 
Let us now make the following assumptions:
\begin{enumerate} 	
  \item[(A1)] $\Phi_s$ is twice continuously differentiable, with $\Phi_s$ and $\nabla \Phi_s$ satisfying a Lipschitz condition, i.e.~there exists $K>0$ such that for all $x,x'\in W_T$ and $s\in[S]$
  \small
  \begin{align*}
    \vert\Phi_s(x)-\Phi_s(x')\vert + \Vert\nabla \Phi_s(x)-\nabla \Phi_s(x')\Vert \leq K \Vert x-x' \Vert    
  \end{align*}
  \normalsize
  \item[(A2)] $f_t(\cdot,\theta)$ and $L_t(\cdot,\theta)$ are twice continuously differentiable in $x$, with $f_t,\nabla_x f_t,L_t,\nabla_x L_t$ satisfying Lipschitz conditions in $x$ uniformly in $t$ and $\theta$, i.e.~there exists $K>0$ such that
  \small
  \begin{align*}
    &\Vert f_t(x,\theta)-f_t(x',\theta)\Vert + \Vert \nabla_x f_t(x,\theta)-\nabla_x f_t(x',\theta)\Vert_2 \\ 
    &+ \vert L_t(x,\theta)-L_t(x',\theta)\vert 
    + \Vert \nabla_x L_t(x,\theta)-\nabla_x L_t(x',\theta)\Vert \\
    &\leq K \Vert x-x' \Vert
  \end{align*}
  \normalsize
  for all $x,x'\in W_t$, $\theta\in\Theta_t$ and $t=0,\dots,T-1$.
\end{enumerate}
 
Again, let us discuss these assumptions with respect to neural networks. 
Note that both assumptions are more easily satisfied if each $W_t$ is bounded, which is usually
implied by the boundedness of $\Theta_t$. Although this is not typically true in principle, we can safely assume this in practice by truncating weights that are too large in magnitude. 
Consequently, (A1) is not very restrictive, since many commonly employed loss functions (mean-square, soft-max with cross-entropy) satisfy these assumptions. 
In (A2), the regularity assumption on $L_t$ is again not an issue, because we mostly take $L_t$ to be independent of $x$. On the other hand, the regularity of $f_t$ with respect to $x$ is sometimes restrictive. 
For example, ReLU activations does not satisfy (A2) due to non-differentiability. Nevertheless, any suitably mollified version (like Soft-plus) does satisfy it. Moreover, tanh and sigmoid activations also satisfy (A2). 
Finally, unlike in Theorem~\ref{thm:PMP}, we do not assume the convexity of the sets $\{f_t(x,\theta):\theta\in\Theta_t\}$ and $\{L_t(x,\theta):\theta\in\Theta_t\}$, and hence the results in this section applies to discrete-weight neural networks considered in Sec.~\ref{sec:application}. 
With the above assumptions, we prove the following estimate.
\begin{theorem}[Error Estimate for Discrete MSA]
\label{thm:error_est}
  Let assumptions (A1) and (A2) be satisfied. Then, there exists a constant $C>0$, independent of $S$, $\bftheta$ and $\bfphi$, such that for any $\bftheta,\bfphi\in\boldsymbol{\Theta}$, we have
  \small
  \begin{align}
    &J(\bfphi) - J(\bftheta) \nonumber \\
    \leq& 
    -\sum_{t=0}^{T-1} \sum_{s=1}^{S} H_t(x^{\bftheta}_{s,t}, p^{\bftheta}_{s,t+1}, \phi_{t}) 
    - H_t(x^{\bftheta}_{s,t}, p^{\bftheta}_{s,t+1}, \theta_{t}) 
    \label{eq:error_est_max}\\ 
    &+ \frac{C}{S} \sum_{t=0}^{T-1} \sum_{s=1}^{S}
    \Vert f_t(x^{\bftheta}_{s,t},\phi_t) - f_t(x^{\bftheta}_{s,t},\theta_t) \Vert^2 
    \label{eq:error_est_state} \\ 
    &+ \frac{C}{S} \sum_{t=0}^{T-1} \sum_{s=1}^{S}
    \Vert \nabla_x f_t(x^{\bftheta}_{s,t}, \phi_t) 
    - \nabla_x f_t(x^{\bftheta}_{s,t}, \theta_t) \Vert_2^2, 
    \label{eq:error_est_costate1} \\
    &+ \frac{C}{S} \sum_{t=0}^{T-1} \sum_{s=1}^{S}
    \Vert \nabla_x L_t(x^{\bftheta}_{s,t}, \phi_t) 
    - \nabla_x L_t(x^{\bftheta}_{s,t}, \theta_t) \Vert^2, 
    \label{eq:error_est_costate2}
  \end{align}
  \normalsize
  where $\bfx_s^{\bftheta}$, $\bfp_s^{\bftheta}$ are defined by Eq.~\eqref{eq:msa_state} and~\eqref{eq:msa_costate}. 
\end{theorem}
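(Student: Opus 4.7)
My plan is to expand $J(\bfphi)-J(\bftheta)$ around the trajectory $\{x^{\bftheta}_{s,t}\}$, use the co-state equations to force cancellations that isolate the Hamiltonian difference as the leading term, and bound every residual by the claimed quadratic quantities. I work throughout with $S\cdot[J(\bfphi)-J(\bftheta)]$ so that the factors of $S$ hidden in the boundary condition $p^{\bftheta}_{s,T}=-S^{-1}\nabla\Phi_s(x^{\bftheta}_{s,T})$ can be tracked explicitly. Writing $\Delta x_{s,t}:=x^{\bfphi}_{s,t}-x^{\bftheta}_{s,t}$, with $\Delta x_{s,0}=0$, I split the objective difference into the terminal contribution $\sum_s[\Phi_s(x^{\bfphi}_{s,T})-\Phi_s(x^{\bftheta}_{s,T})]$ and the running increments $\sum_{s,t}[L_t(x^{\bfphi}_{s,t},\phi_t)-L_t(x^{\bftheta}_{s,t},\theta_t)]$.

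The Hamiltonian is then inserted via the identity $L_t(x,\theta)=S[p\cdot f_t(x,\theta)-H_t(x,p,\theta)]$, applied with $p=p^{\bftheta}_{s,t+1}$, to the part of each $L_t$-increment that keeps the state at $x^{\bftheta}_{s,t}$; this manufactures the desired $-S\sum_{s,t}[H_t(\cdot,\phi_t)-H_t(\cdot,\theta_t)]$ together with a cross term $Sp^{\bftheta}_{s,t+1}\cdot[f_t(x^{\bftheta}_{s,t},\phi_t)-f_t(x^{\bftheta}_{s,t},\theta_t)]$. Second-order Taylor expansions of $\Phi_s$ around $x^{\bftheta}_{s,T}$, and of $f_t(\cdot,\phi_t)$ and $L_t(\cdot,\phi_t)$ around $x^{\bftheta}_{s,t}$, convert the state-difference pieces into linear-in-$\Delta x_{s,t}$ terms plus $O(\Vert\Delta x_{s,t}\Vert^2)$ remainders controlled by (A1)--(A2). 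Substituting the $f_t$-expansion to replace $f_t(x^{\bftheta}_{s,t},\phi_t)-f_t(x^{\bftheta}_{s,t},\theta_t)$ by $\Delta x_{s,t+1}$ up to a remainder, followed by the discrete Abel identity $\sum_{t=0}^{T-1}p^{\bftheta}_{s,t+1}\cdot\Delta x_{s,t+1}=\sum_{t=0}^{T-1}p^{\bftheta}_{s,t}\cdot\Delta x_{s,t}+p^{\bftheta}_{s,T}\cdot\Delta x_{s,T}$, lets the terminal boundary condition absorb the $\nabla\Phi_s$ contribution. The co-state recursion $p^{\bftheta}_{s,t}=\nabla_x f_t(x^{\bftheta}_{s,t},\theta_t)^{\top}p^{\bftheta}_{s,t+1}-S^{-1}\nabla_x L_t(x^{\bftheta}_{s,t},\theta_t)$ then cancels every remaining linear-in-$\Delta x_{s,t}$ piece that shares the argument $\theta_t$, leaving only the Hamiltonian difference, the cross products $[\nabla_x L_t(\cdot,\phi_t)-\nabla_x L_t(\cdot,\theta_t)]\cdot\Delta x_{s,t}$ and $Sp^{\bftheta}_{s,t+1}\cdot[\nabla_x f_t(\cdot,\theta_t)-\nabla_x f_t(\cdot,\phi_t)]\Delta x_{s,t}$, and the Taylor remainders.

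To match the stated form, I apply Young's inequality to each cross product, peeling off a squared parameter-gap piece (that matches \eqref{eq:error_est_costate1} or \eqref{eq:error_est_costate2}) plus $\Vert\Delta x_{s,t}\Vert^2$; the Taylor remainders are already $O(\Vert\Delta x_{s,t}\Vert^2)$. A discrete Gr\"onwall step based on the one-step inequality $\Vert\Delta x_{s,t+1}\Vert\leq K\Vert\Delta x_{s,t}\Vert+\Vert f_t(x^{\bftheta}_{s,t},\phi_t)-f_t(x^{\bftheta}_{s,t},\theta_t)\Vert$ then yields $\Vert\Delta x_{s,t}\Vert^2\leq C\sum_{r<t}\Vert f_r(x^{\bftheta}_{s,r},\phi_r)-f_r(x^{\bftheta}_{s,r},\theta_r)\Vert^2$, supplying~\eqref{eq:error_est_state}. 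Dividing the resulting bound on $S[J(\bfphi)-J(\bftheta)]$ by $S$ delivers \eqref{eq:error_est_max}--\eqref{eq:error_est_costate2}.

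The main obstacle is ensuring the constant $C$ is \emph{independent} of $S$. Each time the $L_t$ identity pulls out a factor of $S$ multiplied by $p^{\bftheta}_{s,t+1}$, I must absorb it using the uniform bound $\Vert p^{\bftheta}_{s,t}\Vert\leq C'/S$, which I establish by backward induction on the co-state recursion starting from $\Vert p^{\bftheta}_{s,T}\Vert\leq K/S$ (forced by the Lipschitz condition on $\Phi_s$ in (A1)) and using $\Vert\nabla_x f_t\Vert_2\leq K$, $\Vert\nabla_x L_t\Vert\leq K$ (consequences of the Lipschitz hypotheses in (A2)). Once $S\Vert p^{\bftheta}_{s,t}\Vert$ is bounded uniformly in $S$ and $t$, every explicit $S$ factor collapses inside the residual products, and the remainder of the proof is mechanical book-keeping of Taylor remainders and constants depending only on $K$ and $T$.
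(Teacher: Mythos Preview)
Your proposal is correct and uses the same essential ingredients as the paper's proof: the co-state bound $\Vert p^{\bftheta}_{s,t}\Vert\leq C/S$ established by backward induction, second-order Taylor expansions in $x$ controlled by (A1)--(A2), Young's inequality for the cross terms, and a discrete Gr\"onwall estimate to convert $\Vert\Delta x_{s,t}\Vert^2$ into the $f_t$-gap squares of~\eqref{eq:error_est_state}.

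The organization differs in one useful way. The paper introduces an auxiliary functional $I(\bfx,\bfp,\bftheta)=\sum_t p_{t+1}\cdot x_{t+1}-H_t-L_t$ (which vanishes on trajectories) and subtracts $I(\bfx^{\bfphi},\bfp^{\bfphi},\bfphi)-I(\bfx^{\bftheta},\bfp^{\bftheta},\bftheta)$; this brings in \emph{both} co-state processes and forces a separate Gr\"onwall estimate on $\Vert p^{\bfphi}_{s,t}-p^{\bftheta}_{s,t}\Vert$ before the final bound can be assembled. Your one-sided expansion around $(\bfx^{\bftheta},\bfp^{\bftheta})$ via the Abel/summation-by-parts identity never introduces $\bfp^{\bfphi}$ at all, so the $\delta p$ estimate is simply absent. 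This makes your argument slightly shorter, at the cost of the paper's more symmetric bookkeeping; both routes arrive at the same quadratic residual structure.
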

\begin{proof}
  The proof follows from elementary estimates and a discrete Gronwall's lemma. See Appendix B.
\end{proof}

Theorem~\ref{thm:error_est} relates the decrement of the total objective function $J$ with respect to the iterative projection steps of the MSA. Intuitively, Theorem~\ref{thm:error_est} says that the Hamiltonian maximization step~\eqref{eq:msa_max} is generally the right direction, because a large magnitude of~\eqref{eq:error_est_max} results in higher loss improvement. However, whenever we change the parameters from $\bftheta$ to $\bfphi$ (e.g.~during the maximization step~\eqref{eq:msa_max}), we incur non-negative penalty terms~\eqref{eq:error_est_state}-\eqref{eq:error_est_costate2}. Observe that these penalty terms vanish if $\bfphi=\bftheta$, or more generally, when the state and co-state equations (Eq.~\eqref{eq:msa_state},~\eqref{eq:msa_costate}) are still satisfied when $\bftheta$ is replaced by $\bfphi$. In other words, these terms measure the distance from manifolds defined by the state and co-state equations when the parameter changes.
Alg.~\ref{alg:basic_msa} diverges when these penalty terms dominate the gains from~\eqref{eq:error_est_max}. This insight can point us in the right direction of developing convergent modifications of the basic MSA. We shall now discuss this in the context of some specific applications.

\section{Neural Networks with Discrete Weights}
\label{sec:application}
We now turn to the application of the theory developed in the previous section on the MSA, which is a PMP-based numerical method for training deep neural networks. As discussed previously, the main strength of the PMP and MSA formalism is that we do not rely on gradient-descent type updates. This is particularly useful when one considers neural networks with (some) trainable parameters that can only take values in a discrete set. Then, any small gradient update to the parameters will almost always be infeasible. 
In this section, we will consider two such cases: {\it binary networks}, where weights are restricted to $\{-1,+1\}$; and {\it ternary networks}, where weights are selected from $\{-1,+1,0\}$. These networks are potentially useful for low-memory devices as storing the trained weights requires less memory. 
In this section, we will modify the MSA so that we can train these networks in a principled way. 

\subsection{Binary Networks}
\label{sec:binary}
Binary neural networks are those with binary trainable layers, e.g.~in the fully connected case,
\begin{equation}
  f_t(x,\theta) = \theta x
  \label{eq:binary_ft}
\end{equation}
where $\theta\in\Theta_t=\{-1,+1\}^{d_t\times d_{t+1}}$ is a binary matrix. A similar form of $f_t$ holds for convolution neural networks after reshaping, except that $\Theta_t$ is now the set of Toeplitz binary matrices. Hereafter, we will consider the fully connected case for simplicity of exposition. It is also natural to set the regularization to $0$ since there is in general no preference between $+1$ or $-1$. Thus, the Hamiltonian has the form
\[
  H_t(x,p,\theta) = p\cdot \theta x.
\]
Consequently, the Hamiltonian maximization step~\eqref{eq:msa_max} has explicit solution, given by
\[
  \argmax_{\theta\in\Theta_t} \sum_{s=1}^{S} H_t(x^{\bftheta^k}_{s,t}, p^{\bftheta^k}_{s,t+1},\theta)
  = \sgn(M^{\bftheta^k}_t)
\]
where $M^{\bftheta}_t:=\sum_{s=1}^{S} p^{\bftheta}_{s,t+1}(x^{\bftheta}_{s,t})^T$. 
Note that the sign function is applied element-wise. If $[M^{\bftheta}_t]_{ij}=0$, then the arg-max is arbitrary. 
Using Theorem~\ref{thm:error_est} with the form of $f_t$ given by~\eqref{eq:binary_ft} and the fact that $L_t\equiv 0$, we get
\begin{align*}
  J(\bfphi) - J(\bftheta) 
  \leq & - \sum_{t=0}^{T-1}\sum_{s=1}^{S} H_t(x^{\bftheta^k}_{s,t}, p^{\bftheta^k}_{s,t+1},\theta) \\
  & + \frac{C}{S} \sum_{t=0}^{T-1}(1 + \sum_{s=1}^{S} \Vert x^{\bftheta}_{s,t} \Vert^2) \Vert\phi_t - \theta_t \Vert_F^2, \\
\end{align*}
Note that we have used the inequality $\Vert\cdot\Vert_2\leq\Vert\cdot\Vert_F$. Assuming that $\Vert x^\theta_{s,t} \Vert$ is $\mathcal{O}(1)$, 
we may then decrease $J$ by not only maximizing the Hamiltonian, but also penalizing the difference
$\Vert\phi_t - \theta_t \Vert_F$, i.e.~for each $k$ and $t$ we set
\begin{align}
  \theta^{k+1}_t =& \argmax_{\theta\in\Theta_t} \left[ \sum_{s=1}^{S} H_t(x^{\bftheta^k}_{s,t}, p^{\bftheta^k}_{s,t+1},\theta) 
  - \rho_{k,t} \Vert \theta - \theta^k \Vert^2_F \right] 
  \label{eq:augmented_H}
\end{align}
for some penalization parameters $\rho_{k,t}>0$. This again has the explicit solution
\begin{align}
  [\theta^{k+1}_t]_{ij}=&
  \begin{cases}
    \sgn([M^{\bftheta^k}_t]_{ij}) & \vert [M^{\bftheta^k}_t]_{ij} \vert \geq 2\rho_{k,t}\\
    [\theta^k_t]_{ij} & \text{otherwise}
  \end{cases}
  \label{eq:msa_argmax_regularized_binary}
\end{align}
Therefore, we simply replace the parameter update step in Alg.~\ref{alg:basic_msa} with~\eqref{eq:msa_argmax_regularized_binary}. Furthermore, to deal with mini-batches, we keep a moving average of $M^{\bftheta^k}_t$ across different mini-batches and use the averaged value to update our parameters. It is found empirically that this further stabilizes the algorithm. 
Note that the assumption $\Vert x^{\bftheta}_{s,t} \Vert$ is $\mathcal{O}(1)$ can be achieved by normalization, e.g. batch-normalization~\cite{ioffe2015batch}. We summarize the algorithm in Alg.~\ref{alg:binary_msa}. Further algorithmic details are found in Appendix D, where we also discuss the choice of hyper-parameters and the convergence of the algorithm for a simple binary regression problem. A rigorous proof of convergence in the general case is beyond the scope of this work, but we demonstrate via experiments below that the algorithm performs well on the tested benchmarks. 

\begin{algorithm}[tb]
  \caption{Binary MSA}
\label{alg:binary_msa}
\begin{algorithmic}
  \small
  \STATE Initialize: $\bftheta^0$, $\overline{\mathbf{M}}^0$;
  \STATE Hyper-parameters: $\rho_{k,t}$, $\alpha_{k,t}$;
  \FOR{$k = 0$ {\bfseries to} \#Iterations}
  \STATE $x^{\bftheta^k}_{s,t+1} = f_t(x^{\bftheta^k}_{s,t},\theta^k_t) \qquad \forall s,t$
  \STATE $\qquad$ with $x^{\bftheta^k}_{s,0}=x_{s,0}$;
  \STATE $p^{\bftheta^k}_{s,t} = \nabla_x H_t(x^{\bftheta^k}_{s,t},p^{\bftheta^k}_{s,t+1},\theta^k_t) \qquad \forall s,t$
  \STATE $\qquad$ with $p^{\bftheta^k}_{s,T}=-\tfrac{1}{S}\nabla \Phi_s(x_{s,T})$;
  \STATE $\overline{M}^{k+1}_t = \alpha_{k,t}\overline{M}^{k}_t + (1-\alpha_{k,t})\sum_{s=1}^S p^{\bftheta^k}_{s,t+1}(x^{\bftheta^k}_{s,t})^T$
  \STATE $[\theta^{k+1}_t]_{ij}=
          \begin{cases}
            \sgn([\overline{M}^{k+1}_t]_{ij}) & \vert [\overline{M}^{k+1}_t]_{ij} \vert \geq 2\rho_{k,t}\\
            [\theta^k_t]_{ij} & \text{otherwise}
          \end{cases}$
  \STATE $\forall t$, $i$, and $j$;
  \normalsize
  \ENDFOR
\end{algorithmic}
\end{algorithm}
 
We apply Alg.~\ref{alg:binary_msa} to train binary neural networks on various benchmark datasets and compare the results from previous work on training binary-weight neural networks~\cite{courbariaux2015binaryconnect}. 
We consider a fully-connected neural network on MNIST~\cite{lecun1998mnist}, as well as (shallow) convolutional networks on CIFAR-10~\cite{krizhevsky2009learning} and SVHN~\cite{netzer2011reading} datasets. The network structures are mostly identical to those considered in~\citet{courbariaux2015binaryconnect} for ease of comparison. Complete implementation and model details are found in Appendix D. The graphs of training/testing loss and error rates are are shown in Fig.~\ref{fig:binary}. We observe that our algorithm performs well in terms of an optimization algorithm, as measured by the training loss and error rates. For the harder datasets (CIFAR-10 and SVHN), we have rapid convergence but worse test loss and error rates at the end, possibly due to overfitting. We note that in~\cite{courbariaux2015binaryconnect}, many regularization strategies are performed. We expect that similar techniques must be employed to improve the testing performance. However, these issues are out of the scope of the optimization framework of this paper. Note that we also compared the results of BinaryConnect without regularization strategies such as stochastic binarization, but the results are similar in that our algorithm converges very fast with very low training losses, but sometimes overfits. 

\begin{figure}[tb]
  \centering
  \includegraphics[width=8cm]{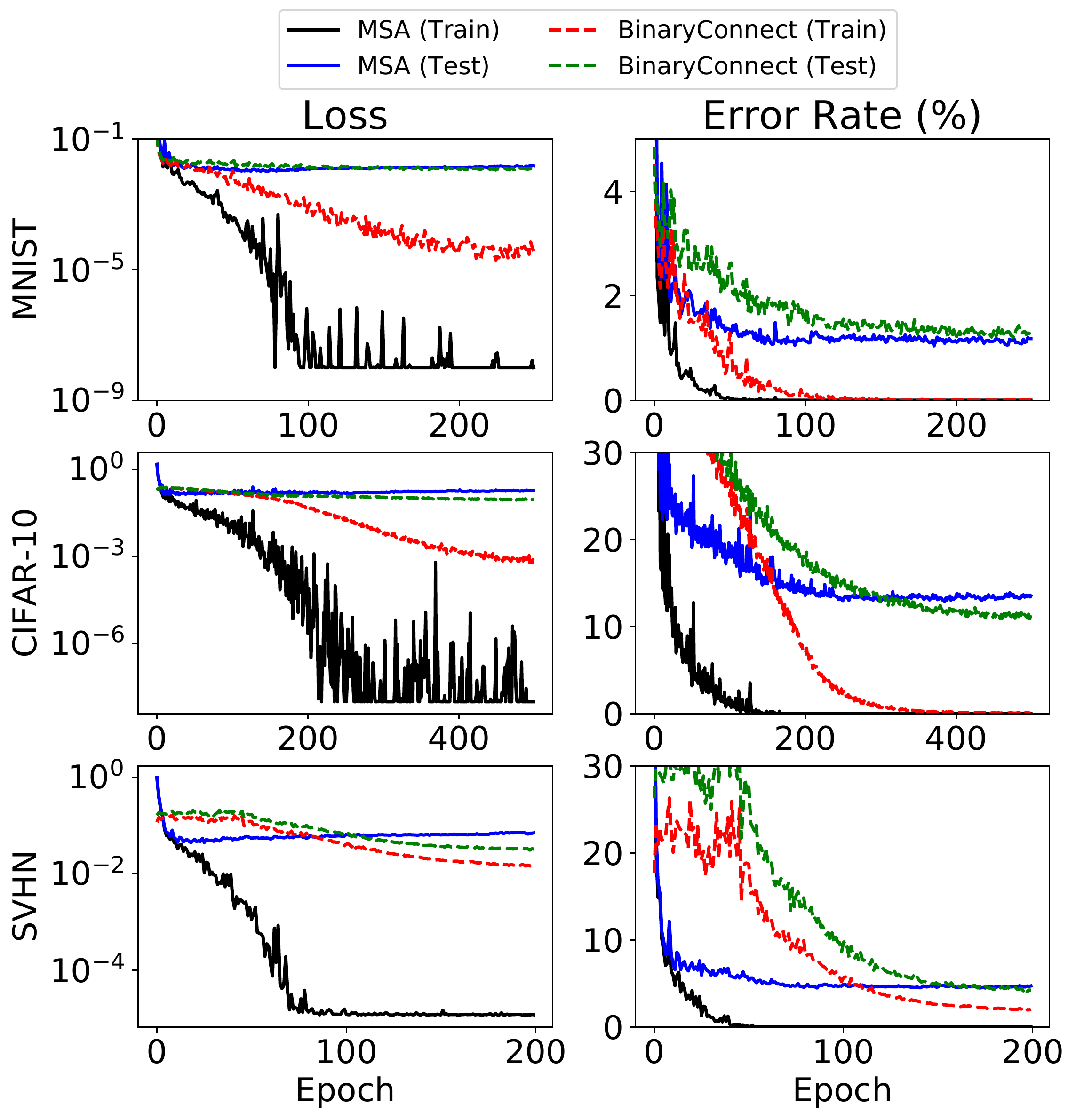}
  \caption{Comparison of binary MSA (Alg.~\ref{alg:binary_msa}) with BinaryConnect~\cite{courbariaux2015binaryconnect} (with binary variables for inference). We observe that MSA has good convergence in terms of the training loss and error rates, showing that it is an efficient optimization algorithm. Note that to avoid broken lines, when the loss equals 0 exactly, we replace it by 1e-8 on the log-scale. The test loss for the bigger datasets (CIFAR10, SVHN) eventually becomes worse due to overfitting, hence some regularization techniques is needed for applications which are prone to overfitting. }
\label{fig:binary}
\end{figure} 

\subsection{Ternary Networks}
\label{sec:ternary}
We shall consider another case where the network weights are allowed to take on values in $\{-1,+1,0\}$. In this case, our goal is to explore the sparsification of the network. To this end, we shall take $L_t(x,\theta)=\lambda_t\Vert\theta \Vert_F^2$ for some parameter $\lambda_t$. Note that since the weights are restricted to the ternary set, any component-wise $\ell_p$ regularization for $p>0$ are identical. The higher the $\lambda_t$ values, the more sparse the solution will be. 

As in Sec.~\ref{sec:binary}, we can write down the Hamiltonian for a fully connected ternary layer as
\[
  H_t(x,p,\theta) = p\cdot \theta x - \tfrac{1}{S}\lambda_t\Vert \theta\Vert_F^2.
\]
The derivation of the ternary algorithm then follows directly from those in Sec.~\ref{sec:binary}, but with the new form of Hamiltonian above and that $\Theta_t=\{-1,+1,0\}^{d_t\times d_{t+1}}$. Maximizing the augmented Hamiltonian~\eqref{eq:augmented_H} with $H_t$ as defined above, we obtain the ternary update rule
\begin{equation}
  [\theta^{k+1}_t]_{ij} = 
    \begin{cases}
      +1 & [M^{\bftheta^k}_t]_{ij} \geq  \rho_{k,t} (1-2[\theta^k_t]_{ij}) + \lambda_t \\
      -1 & [M^{\bftheta^k}_t]_{ij} \leq -\rho_{k,t} (1+2[\theta^k_t]_{ij}) - \lambda_t \\
      0 & \text{otherwise.}
    \end{cases}
  \label{eq:msa_argmax_regularized_ternary}
\end{equation}
We replace the parameter update step in Alg.~\ref{alg:binary_msa} by~\eqref{eq:msa_argmax_regularized_ternary} to obtain the MSA algorithm for ternary networks. For completeness, we give the full ternary algorithm in Alg.~\ref{alg:ternary_msa}. We now test the ternary algorithm on the same benchmarks used in Sec.~\ref{sec:binary} and the results are shown in Fig.~\ref{fig:ternary}. Observe that the performance on training and testing datasets is similar to the binary case (Fig.~\ref{fig:binary}), but the ternary networks achieve high degrees of sparsity in the weights, with only 0.5-2.5\% of the trained weights being non-zero, depending on the dataset. This potentially offers significant memory savings compared to its binary or full floating precision counterparts. 

\begin{algorithm}[tb]
  \caption{Ternary MSA}
\label{alg:ternary_msa}
\begin{algorithmic}
  \small
  \STATE Initialize: $\bftheta^0$, $\overline{\mathbf{M}}^0$;
  \STATE Hyper-parameters: $\rho_{k,t}$, $\alpha_{k,t}$;
  \FOR{$k = 0$ {\bfseries to} \#Iterations}
  \STATE $x^{\bftheta^k}_{s,t+1} = f_t(x^{\bftheta^k}_{s,t},\theta^k_t) \qquad \forall s,t$
  \STATE $\qquad$ with $x^{\bftheta^k}_{s,0}=x_{s,0}$;
  \STATE $p^{\bftheta^k}_{s,t} = \nabla_x H_t(x^{\bftheta^k}_{s,t},p^{\bftheta^k}_{s,t+1},\theta^k_t) \qquad \forall s,t$
  \STATE $\qquad$ with $p^{\bftheta^k}_{s,T}=-\tfrac{1}{S}\nabla \Phi_s(x_{s,T})$;  \STATE $\overline{M}^{k+1}_t = \alpha_{k,t}\overline{M}^{k}_t + (1-\alpha_{k,t})\sum_{s=1}^S p^{\bftheta^k}_{s,t+1}(x^{\bftheta^k}_{s,t})^T$
  \STATE $[\theta^{k+1}_t]_{ij} = 
  \begin{cases}
    +1 & [\overline{M}^{k+1}_{t}]_{ij} \geq  \rho_{k,t} (1-2[\theta^k_t]_{ij}) + \lambda_t \\
    -1 & [\overline{M}^{k+1}_{t}]_{ij} \leq -\rho_{k,t} (1+2[\theta^k_t]_{ij}) - \lambda_t \\
    0 & \text{otherwise.}
  \end{cases}$
  \STATE $\forall t$, $i$, and $j$; 
  \normalsize
  \ENDFOR
\end{algorithmic}
\end{algorithm}

\begin{figure}[tb]
  \centering
  \includegraphics[width=8cm]{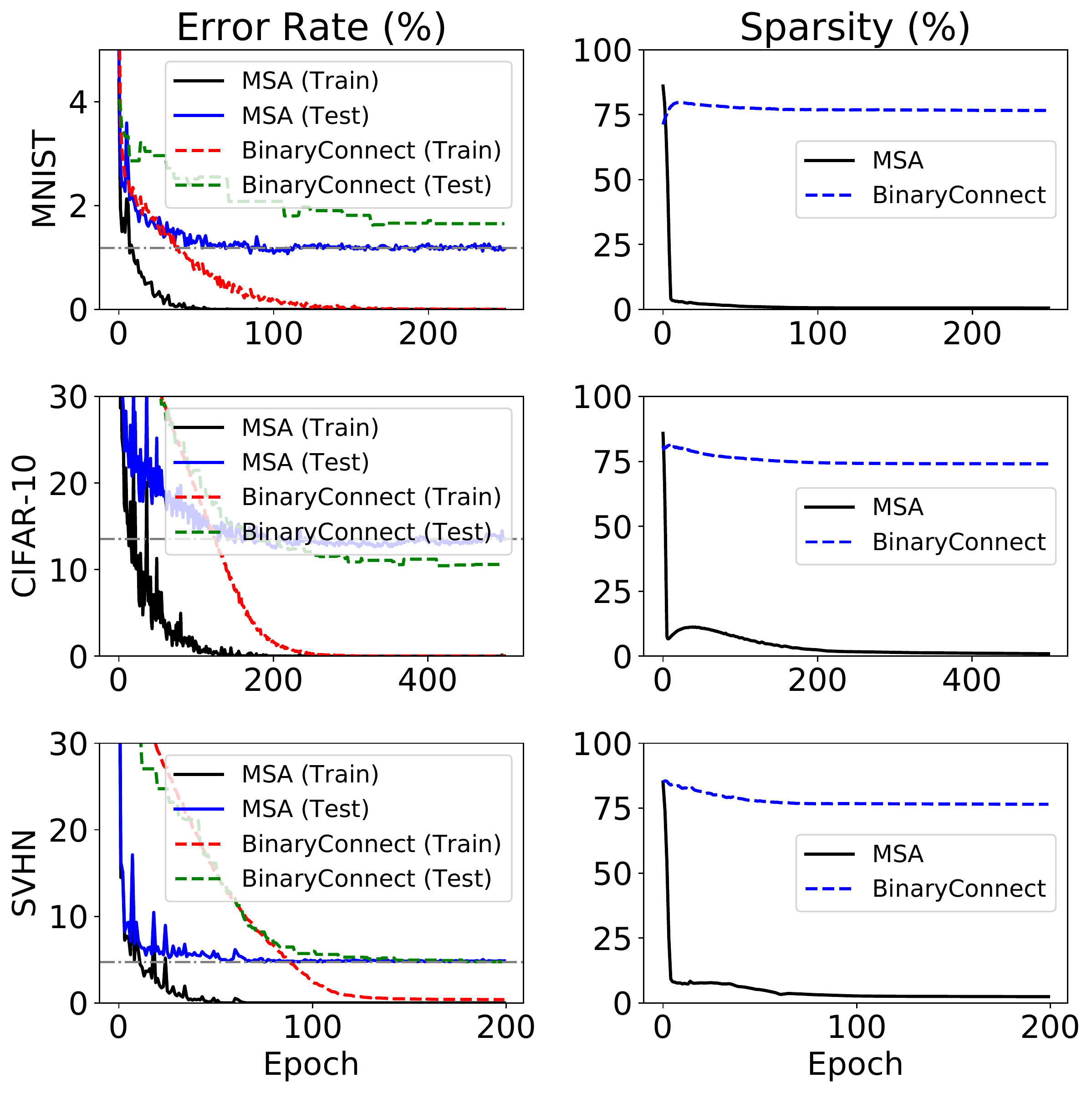}
  \caption{Performance of the ternary MSA (Alg.~\ref{alg:ternary_msa}) vs. BinaryConnect with a simple thresholding procedure described in~\citep{li2016ternary}. 
  In the second column of plots, we show the sparsity of the networks (defined as the percentage of all weights that are non-zero) as training proceeds. Observe that the MSA algorithm finds solutions with comparable error rates with the binary case (whose final test-error is plotted as a grey horizontal line) but are very sparse. In comparison, the simple thresholding of BinaryConnect does not produce sparse solutions. In fact, the final sparsities for MSA are approximately: MNIST:~$<$1.0\%; CIFAR-10:~0.9\%; SVHN:~2.4\%. It is expected that sparser solutions can be found by adjusting the penalty parameters $\lambda_t$.}
\label{fig:ternary}
\end{figure}

\section{Discussion and Related Work}
\label{sec:discussion}
We begin with a discussion of the results presented thus far. We first introduced the viewpoint that deep learning can be regarded as a discrete-time optimal control problem. Consequently, an important result in optimal control theory, the Pontryagin's maximum principle, can be applied to give a set of necessary conditions for optimality. These are in general stronger conditions than the usual optimality conditions based on the vanishing of first-order partial derivatives. Moreover, they apply to broader contexts such as problems with constraints on the trainable parameters or problems that are non-differentiable in the trainable parameters. 
However, we note that specific assumptions regarding the convexity of some sets must be satisfied. We showed that they are justified for conventional neural networks, but not necessarily so for all neural networks (e.g.~binary, ternary networks). 

Next, based on the PMP, we introduced an iterative projection technique, the discrete method of successive approximations (MSA), to find an optimal solution of the learning problem. A rigorous error estimate (Theorem~\ref{thm:error_est}) is derived for the discrete MSA, which can be used to both understand its dynamics and to derive useful algorithms. This should be viewed as the main theoretical result of the present paper. 
Note that the usual back-propagation with gradient descent can be regarded as a simple modification of the MSA, if differentiability conditions are assumed (see Appendix C). Nevertheless, we note that Theorem~\ref{thm:error_est} itself does not assume any regularity conditions with respect to the trainable parameters. Moreover,neither does it require the convexity conditions in Theorem~\ref{thm:PMP}, and hence applies to a wider range of neural networks, including those in Sec.~\ref{sec:application}. All results up to this point apply to general neural networks (assuming that the respective conditions are satisfied), and are not specific to the applications presented subsequently. 

In the last part of this work, we apply our results to devise training algorithms for discrete-weight neural networks, i.e.~those with trainable parameters that can only take values in a discrete set. Besides potential applications in model deployment in low-memory devices, the main reasons for choosing such applications are two-fold. First, gradient-descent updates are not applicable by itself because small updates to parameters are prohibited by the discrete equality constraint on the trainable parameters. However, our method based on the MSA is applicable since it does not perform gradient-descent updates. Second, in such applications the potentially expensive Hamiltonian maximization steps in the MSA have explicit solutions. This makes MSA an attractive optimization method for problems of this nature. In Sec~\ref{sec:application}, we demonstrate the effectiveness of our methods on various benchmark datasets. Interestingly, the ternary network exhibits extremely sparse weights that perform almost as well as its binary counter-part (see Fig.~\ref{fig:ternary}). Also, the phenomena of overfitting in Fig.~\ref{fig:binary} and~\ref{fig:ternary} are interesting as overfitting is generally less common in stochastic gradient based optimization approaches. This seems to suggest that the MSA based methods optimize neural networks in a rather different way. 

Let us now put our work in the context of the existing literature. First, the optimal control approach we adopt is quite different from the prevailing viewpoint of nonlinear programming~\cite{bertsekas1999nonlinear,bazaraa2013nonlinear,kuhn2014nonlinear} and the analysis of the derived gradient-based algorithms~\cite{moulines2011non,shamir2013stochastic,bach2013non,xiao2014proximal,shalev2014accelerated} for the training of deep neural networks. In particular, the PMP (Thm.~\ref{thm:PMP}) and the MSA error estimate (Thm.~\ref{thm:error_est}) do not assume differentiability and do not characterize optimality via gradients (or sub-gradients) with respect to trainable parameters. In this sense, it is a stronger and more robust condition, albeit sometimes requiring different assumptions. The optimal control and dynamical systems viewpoint has been discussed in the context of deep learning in~\citet{e2017proposal,li2017maximum} and dynamical systems based discretization schemes has been introduced in~\citet{haber2017stable,chang2017reversible}. Most of these works have theoretical basis in continuous-time dynamical systems. In particular,~\citet{li2017maximum} analyzed continuous-time analogues of neural networks in the optimal control framework and derived MSA-based algorithms in continuous time. In contrast, the present work presents a discrete-time formulation, which is natural in the usual context of deep learning. The discrete PMP turns out to be more subtle, as it requires additional assumptions of convexity of reachable sets (Thm.~\ref{thm:PMP}). Note also that unlike the estimates derived in~\citet{li2017maximum}, Thm.~\ref{thm:error_est} holds rigorously for discrete-time neural networks. 
The present method for stabilizing the MSA is also different from that in~\citet{li2017maximum}, where augmented Lagrangian type of modifications are employed. The latter would not be effective here because weights cannot be updated infinitesimally without violating the binary/ternary constraint. Moreover, the present methods that rely on explicit solutions of Hamiltonian maximization are fast (comparable to SGD) on a wall-clock basis. 

In the deep learning literature, the connection between optimal control and deep learning has been qualitative discussed in~\citet{le1988theoretical} and applied to the development of automatic differentiation and back-propagation~\cite{bryson1975applied,baydin2015automatic}. However, there are relatively fewer works relating optimal control algorithms to training neural networks beyond the classical gradient-descent with back-propagation. Optimal control based strategies in hyper-parameter tuning has been discussed in~\citet{li2017stochastic}. 

In the continuous-time setting, the Pontryagin's maximum principle and the method of successive approximations have a long history, with a large body of relevant literature including, but not limited to~\citet{boltyanskii1960theory,pontryagin1987mathematical,bryson1975applied,bertsekas1995dynamic,athans2013optimal,krylov1962msa,aleksandrov1968accumulation,krylov1972algorithm,chernousko1982method,lyubushin1982modifications}. The discrete-time PMP have been studied in~\citet{halkin1966maximum,holtzman1966convexity,holtzman1966discretional,holtzman1966maximum,canon1970theory}, where Theorem~\ref{thm:PMP} and its extensions are proved. To the best of our knowledge, the discrete-time MSA and its quantitative analysis have not been performed in either the deep learning or the optimal control literature. 

Sec.~\ref{sec:application} concerns the application of the MSA, in particular Thm.~\ref{thm:error_est}, to develop training algorithms for binary and ternary neural networks. There are a number of prior work exploring the training of similar neural networks, such as~\citet{courbariaux2015binaryconnect,hubara2016binarized,rastegari2016xnor,tang2017train,li2016ternary,zhu2016trained}. Theoretical analysis for the case of convex loss functions is carried out in~\citet{li2017training}. Our point of numerical comparison for the binary MSA algorithm is~\citet{courbariaux2015binaryconnect}, where optimization of binary networks is based on shadow variables with full floating-point precision that is iteratively truncated to obtain gradients. We showed in Sec.~\ref{sec:binary} that the binary MSA is competitive as a training algorithm, but is in need of modifications to reduce overfitting for certain datasets.
Training ternary networks has been discussed in~\citet{hwang1967discrete,kim2014x1000,li2016ternary,zhu2016trained}. The difference in our ternary formulation is that we explore the sparsification of networks using a regularization parameter. In this sense it is related to compression of neural networks (e.g.~\citet{han2015deep}), but our approach trains a network that is naturally ternary, and compression is achieved during training by a regularization term. 
Generally, a contrasting aspect of our approach from the aforementioned literature is that the theory of optimal control, together with Theorem.~\ref{thm:error_est}, provide a theoretical basis for the development of our algorithms. Nevertheless, further work is required to rigorously establish the convergence of these algorithms. 
We also mention a recent work~\cite{yin2018binaryrelax} which analyzes quantized networks and develops algorithms based on relaxing the discrete-weight constraints into continuous regularizers. Lastly, there are also analyses of quantized networks from a statistical-mechanical viewpoint~\cite{baldassi2015subdominant,baldassi2016unreasonable,baldassi2016learning,baldassi2017role}.

\section{Conclusion and Outlook}
\label{sec:conclusion}
In this paper, we have introduced the discrete-time optimal control viewpoint of deep learning. In particular, the PMP and the MSA form an alternative theoretical and algorithmic basis for deep learning that may apply to broader contexts. As an application of our framework, we considered the training of binary and ternary neural networks, in which we develop effective algorithms based on optimal control.

There are certainly many avenues of future work. An interesting mathematical question is the applicability of the PMP for discrete-weight neural networks, which does not satisfy the convexity assumptions in Theorem~\ref{thm:PMP}. It will be desirable to find the condition under which rigorous statements can be made. Another question is to establish the convergence of the algorithms presented. 


\pagebreak

\onecolumn

\part*{Appendix}
\appendix

\section{Full Statement and Sketch of the Proof of Theorem~1}
\label{sec:proof_thm_1}
In this section, we give the full statement of Theorem~1 and a sketch of its proof as presented in~\cite{halkin1966maximum}. We note that in~\cite{halkin1966maximum}, more general initial and final conditions are considered. For simplicity, we shall stick to the current formulation in the main text. We note also that the result presented here has been extended (in the sense that the convexity condition has been relaxed to directional convexity)~\cite{holtzman1966convexity,holtzman1966discretional} and proven in different ways subsequently~\cite{canon1970theory}. 

Before we begin, we simplify the notation by concatenating all the samples $x_s$ into a large vector $x=(x_1,\dots,x_S)$. The functions $f_t$ are then redefined accordingly in the natural way. Moreover, we define the total loss function $\Phi(x):=\tfrac{1}{S}\sum_s \Phi_s(x_s)$ and the total regularization $L_t(x,\theta)=\tfrac{1}{S}\sum_s L_t(x_s,\theta)$. Consequently, we have the reformulated problem
\begin{align}
  &\min_{\bftheta\in\boldsymbol{\Theta}} J(\bftheta)
  := \Phi(x_{T}) 
  + \sum_{t=0}^{T-1} L_t(x_{t},\theta_t) \nonumber \\
  &\text{ subject to:} \nonumber \\
  &x_{t+1} = f_t(x_{t},\theta_t), \quad t=0,\dots,T-1.
  \label{eq:control_problem_reformulated}
\end{align}
We now make the following assumptions:
\begin{itemize}
  \item[(B1)] $\Phi$ is twice continuous differentiable.
  \item[(B2)] $f_t(\cdot,\theta),L_t(\cdot,\theta)$ are twice continuously differentiable with respect to $x$, and $f_t(\cdot,\theta),L_t(\cdot,\theta)$ together with their $x$ partial derivatives are uniformly bounded in $t$ and $\theta$.
  \item[(B3)] The sets $\{ f_t(x,\theta): \theta\in\Theta_t \}$ and $\{ L_t(x,\theta): \theta\in\Theta_t \}$ are convex for every $t$ and $x\in\R^{d_t}$.
\end{itemize}

The full statement of Theorem~1 is as follows:

\begin{theorem}[Discrete PMP, Full Statement]
\label{thm:PMP_full}
  Let (B1)-(B3) be satisfied. Suppose that $\bftheta^*:=\{\theta^*_t : t=0,\dots,T-1\}$
  is an optimal solution of~\eqref{eq:control_problem_reformulated} and $\bfx^*:=\{x^*_t:t=0,\dots,T\}$ is the corresponding state process with $\bftheta=\bftheta^*$. 
  Then, there exists a co-state (or adjoint) process $\bfp^*:=\{p^*_t:t=0,\dots,T\}$ and a real number $\beta\geq 0$ (abnormal multiplier) such that $\{\bfp^*,\beta \}$ are not all zero, and the following holds:
  \begin{align}
    &x^*_{t+1} = \nabla_p H_t(x^*_t, p^*_{t+1}, \theta^*_t) &
    &x^*_0 = x_0 \label{eq:full_pmp_state}\\
    &p^*_{t} = \nabla_x H_t(x^*_t, p^*_{t+1}, \theta^*_t) &
    &p^*_T = -\beta \nabla \Phi(x^*_T) \label{eq:full_pmp_costate}\\
    &H_t(x^*_t,p^*_t,\theta^*_t) \geq H_t(x^*_t, p^*_t, \theta) 
    &&\text{ for all } \theta\in\Theta_t 
    \label{eq:full_pmp_max}
  \end{align}
  for $t=0,1,\dots,T-1$, where the Hamiltonian function $H$ is defined as
  \[
    H_t(x,p,\theta) := p\cdot f_t(x,\theta) - \beta L_t(x,\theta).
  \]
\end{theorem}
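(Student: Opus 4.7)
My approach follows the classical Halkin-style separation argument and proceeds in three stages: reformulation in Mayer (terminal-cost-only) form, construction of a convex linearized reachable cone, and backward propagation through a separating hyperplane.

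First, I would reduce the problem to terminal-only cost. Introduce an augmented scalar coordinate $y_t$ with $y_0 = 0$ and $y_{t+1} = y_t + L_t(x_t, \theta_t)$, and set $z_t = (y_t, x_t) \in \R^{1+d_t}$. Then $J(\bftheta) = \Phi(x_T) + y_T =: \tilde\Phi(z_T)$, and the augmented dynamics $\tilde f_t(z, \theta) := (y + L_t(x,\theta),\, f_t(x,\theta))$ inherit (B2); the convexity hypothesis (B3) transfers, with the understanding that in its standard strong form it refers to the joint image $\{\tilde f_t(z,\theta):\theta\in\Theta_t\}$. This lets me work with a single terminal functional and absorb all running-cost bookkeeping into the state.

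Next, I would inductively define a convex cone $K_t \subseteq \R^{1+d_t}$ of linearized variations along the optimal trajectory $\{z^*_t\}$. Set $K_0 = \{0\}$ and let $K_{t+1}$ be the convex conic hull of $A_t K_t$ together with the needle directions $\{\tilde f_t(z^*_t, \theta) - \tilde f_t(z^*_t, \theta^*_t) : \theta \in \Theta_t\}$, where $A_t := \nabla_z \tilde f_t(z^*_t, \theta^*_t)$. The key lemma --- and the technical crux of the argument --- is a \emph{realizability claim}: every $v \in K_T$ is, to first order, the terminal variation $z^{\bftheta}_T - z^*_T$ produced by some admissible perturbation $\bftheta$ of $\bftheta^*$. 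Here the joint convexity from (B3) is used decisively to combine multiple needle variations at a common time $t$ into a single admissible $\theta \in \Theta_t$, while (B2) controls higher-order remainders uniformly via a discrete Gronwall estimate. Once realizability is in place, optimality of $\bftheta^*$ forces $\nabla\tilde\Phi(z^*_T) \cdot v \geq 0$ for every $v \in K_T$, so $K_T$ and the open half-space $\{v : \nabla\tilde\Phi(z^*_T)\cdot v < 0\}$ are disjoint convex sets in $\R^{1+d_T}$. A finite-dimensional separation theorem then yields a nonzero covector $(\beta, p^*_T)$ with $\beta \geq 0$, proportional to $-\nabla\tilde\Phi(z^*_T)$, and satisfying $(\beta, p^*_T) \cdot v \leq 0$ for every $v \in K_T$; reading off the $x$-component gives the terminal condition $p^*_T = -\beta\nabla\Phi(x^*_T)$.

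Finally, I define $p^*_t$ for $t < T$ by back-propagating this covector through the adjoint of the linearization. A direct computation identifies the $x$-component of $A_t^T (\beta, p^*_{t+1})$ with $\nabla_x H_t(x^*_t, p^*_{t+1}, \theta^*_t)$, yielding the co-state equation. For the Hamiltonian maximization, I would apply the separation to the element $v = \tilde f_t(z^*_t, \theta) - \tilde f_t(z^*_t, \theta^*_t) \in K_{t+1}$, after propagating it forward to $T$ by the matrices $A_s$ for $s > t$; bookkeeping on the $y$- and $x$-coordinates collapses the inequality $(\beta, p^*_T) \cdot (\text{propagated } v) \leq 0$ to $p^*_{t+1}\cdot[f_t(x^*_t, \theta) - f_t(x^*_t, \theta^*_t)] \leq \beta[L_t(x^*_t, \theta) - L_t(x^*_t, \theta^*_t)]$, which rearranges to the stated maximization of $H_t$. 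Nontriviality of the tuple $\{\bfp^*,\beta\}$ is inherited from that of the separating covector $(\beta, p^*_T)$. The hardest part will be the realizability lemma: in continuous time, the convexifying effect of integration (noted in the body of the paper) makes this step automatic, but in discrete time the convexity hypothesis is genuinely needed to mix finite needle variations at a common step into an admissible control, and the resulting first-order remainder must then be propagated uniformly across all $T$ steps via a discrete Gronwall argument.
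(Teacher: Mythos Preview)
Your proposal is correct and follows essentially the same approach as the paper's sketch, which is Halkin's original argument. Both proceed through the identical three stages: Mayer reduction by augmenting the state with the running cost, a linearization/separation step, and adjoint back-propagation of the separating covector to obtain the co-state and Hamiltonian maximization. Your ``realizability lemma'' is exactly what the paper calls Halkin's \emph{linearization lemma} (disjointness of the true reachable set $W_T$ and the better set $S$ forces separation of their linearized counterparts $W^+_T$ and $S^+$); the paper cites this as the crux and defers its proof to Halkin, just as you identify it as the hardest step. One cosmetic difference: you build the variation set as an inductively defined convex cone $K_T$ of propagated needle directions, whereas the paper works directly with the linearized reachable set $W^+_T$ --- but since (B3) allows small-scale convex mixing of needle variations, first-order realizability of your cone is equivalent to convexity of $W^+_T - x^*_T$, and the two formulations coincide. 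Watch the sign convention on the $y$-component of your separating covector: to recover $p^*_T = -\beta\nabla\Phi(x^*_T)$ with $\beta\geq 0$ and your stated Hamiltonian inequality, the augmented covector should read $(-\beta, p^*_T)$ rather than $(\beta, p^*_T)$.
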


\begin{remark}
  Compared with the informal statement, the full statement involves an abnormal multiplier $\beta$. It exists to cover degenerate cases. This is related to ``normality'' in the calculus of variations~\cite{bliss1938normality}, or constraint qualification in the language of nonlinear programming~\cite{kuhn2014nonlinear}. When it equals $0$, the problem is degenerate. In applications we often focus on non-degenerate cases where $\beta$ is positive, in which case we can normalize $\{p^*_t,\beta \}$ accordingly so that $\beta=1$. We then obtain the informal statement in the main text. 
\end{remark}

\begin{proof}[Sketch of the proof of Theorem~\ref{thm:PMP_full}]
  To begin with, we may assume without loss of generality that $L\equiv0$. To see why this is so, we define an extra scalar variable $w_t$ with
  \[
     w_{t+1}=w_{t}+L_t(x_t, \theta_{t}), \quad w_0=0.
  \]
  We then append $w$ to $x$ to form the new $(d_t+1)$-dimensional state vector $(x,w)$. Accordingly, we modify $f_t(x,\theta)$ to $(f_t(x,\theta), w + L_t(x,\theta))$ and $\Phi(x)$ to $\Phi(x) + w$. It is clear that all assumptions (B1)-(B3) are preserved. 

  As in the main text, we define the set of reachable states by the original dynamical system 
  \begin{equation}
    W_t := \{x\in\R^{d_t}: \exists \bftheta
    \text{ s.t. } x^{\bftheta}_t = x \}
  \end{equation}
  where $x^{\bftheta}_t$ is the evolution of the dynamical system for $x_t$ under $\bftheta$. This is basically the set of all states that the system can reach under ``some'' control at time $t$. 
  Let $\{\bfx^*,\bftheta^*\}$ be a pair of optimal solutions of~\eqref{eq:control_problem_reformulated}. Let us define the set of all final states with lower loss value than the optimum as
  \begin{equation}
    S := \{x\in\R^{d_T}: \Phi(x) < \Phi(x^*_T)  \}.
  \end{equation}
  Then, it is clear that $W_T$ and $B$ are disjoint. Otherwise, $\{\bfx^*,\bftheta^*\}$ would not have been optimal. Now, if $W_T$ and $B$ are convex, then one can then use separation properties of convex sets to prove the theorem. However, in general they are non-convex (even if (B3) is satisfied). The idea is to consider the following linearized problem
  \begin{align}
    &\psi_{t+1} = f_t(x^*_t,\theta_t) + \nabla_x f_t(x^*_t,\theta^*_t)(\psi_t-x^*_t), \quad t=0,1,\dots,T-1 \nonumber \\
    &\psi_0 = x_0
    \label{eq:control_problem_linearized}
  \end{align}
  Then, we can similarly define the counter-parts to $W_t$ and $S$ as
  \begin{equation}
    W^+_t := \{x\in\R^{d_t}: \exists \bftheta
    \text{ s.t. } \psi^{\bftheta}_t = x \}
  \end{equation}
  and 
  \begin{equation}
    S^+ := \{x\in\R^{d_T}: (x - x^*_T)\cdot \nabla \Phi(x^*_T) < 0  \}.
  \end{equation}
  It is clear that the sets $W^+_T$ and $S^+$ are both convex. In~\cite{halkin1966maximum}, the author proves an important linearization lemma that says: {\bf if $W_T$ and $S$ are disjoint, then $W^+_T$ and $S^+$
  are separated}, i.e.~there exists a non-zero vector $\pi \in \R^{d_T}$ such that
  \begin{align}
    &(x-x^*_T)\cdot \pi \leq 0  & &x\in W_T^+ \label{eq:separation_1}\\
    &(x-x^*_T)\cdot \pi \geq 0  & &x\in S^+ \label{eq:separation_2}
  \end{align}
  Here, $\pi$ is the normal of a separating hyper-plane of the convex sets $W^+_T$ and $S^+$. In fact, one can show that $\pi=-\beta \nabla \Phi(x^*_T)$ for some $\beta\geq 0$. We note here that the linearization lemma, i.e.~the separation of $W^+_T$ and $S^+$, forms the bulk of the proof of the theorem in~\cite{halkin1966maximum}. The proof relies on topological properties of non-separated convex sets. We shall omit its proof here and refer the reader to~\cite{halkin1966maximum}. 

  Now, we may define $p^*_T = \pi$, and for $t\leq T$, set
  \begin{equation}
    p^*_t = \nabla_x H_t(x^*_t, p^*_{t+1}, \theta^*_t) = \nabla_x {f(x^*_t, \theta^*_t)}^T p^*_{t+1}.
    \label{eq:adjoint_equation}
  \end{equation}
  In other words, $p^*_t$ evolves the normal $\pi$ of the separating hyper-plane of $W_T^+$ and $S^+$ backwards in time. An important property one can check is that $p^*_t$ and $\psi_t$ (defined by Eq.~\eqref{eq:adjoint_equation} and~\eqref{eq:control_problem_linearized}) are adjoint of each other at the optimum, i.e.~if $\theta_t=\theta^*_t$, then we have
  \begin{equation}
    (\psi_{t+1}-x^*_{t+1})\cdot p^*_{t+1} = (\psi_{t}-x^*_{t})\cdot p^*_{t}.
    \label{eq:adjoint_relation}
  \end{equation}
  This fact allows one to prove the Hamiltonian maximization condition~\eqref{eq:full_pmp_max}. Indeed,
  suppose that for some $t\in\{0,\dots,T-1\}$ the condition is violated, i.e.~there exists $\tilde{\theta}\in \Theta_t$ such that
  \[
    H_t(x^*_t, p^*_{t+1}, \tilde{\theta}) = H_t(x^*_t, p^*_{t+1}, \theta^*_t) + \epsilon
  \]
  for some $\epsilon>0$. This means
  \[
    p^*_{t+1}\cdot f_t(x^*_t, \tilde{\theta}) = p^*_{t+1}\cdot f_t(x^*_t, \theta^*_t) + \epsilon
  \]
  i.e.,
  \[
    p^*_{t+1}\cdot (f_t(x^*_t, \tilde{\theta}) - x^*_{t+1}) = \epsilon
  \]
  Now, we simply evolve $\psi_s$, $s\geq t+1$ with $\theta_s=\theta^*_s$ but the initial condition 
  $\psi_{t+1} = f_t(x^*_t, \tilde{\theta})$. Then, Eq.~\eqref{eq:adjoint_relation} implies that $\pi\cdot (\psi_T - x^*_T)\cdot = \epsilon > 0$, but this contradicts~\eqref{eq:separation_1}. 
\end{proof}

\begin{remark}
  Note that in the original proof~\cite{halkin1966maximum}, it is also assumed that $\nabla_x f_t$ is non-singular, which also forces $d_t=d$ to be constant for all $t$. This is obviously not satisfied naturally by most neural networks that have changing dimensions. However, 
  one can check that this condition only serves to ensure that if $p^*_T\neq 0$, then $p^*_t\neq 0$ for all 
  $t=0,\dots,T-1$. Hence, without this assumption, we can only be sure that not all $\{\bfp^*,\beta\}$ are 0. 
\end{remark}

\subsection{The Convexity Condition for Neural Networks}
\label{eq:conv_condition_for_nns}
As also discussed in the main text, the most stringent condition in Theorem~\ref{thm:PMP_full} is the convexity condition for $f_t$, i.e.~the set $\{f_t(x,\theta):\theta\in\Theta\}$ must be convex. It is easy to see that for the usual feed-forward neural networks, one can decompose it in such a way that the convexity constraint is satisfied as long as the parameter sets $\Theta_t$ are convex. Indeed,we have
\[
  x_{t+1} = \sigma(g_t(x_t, \theta_t))
\]
where $\sigma$ is some non-trainable nonlinear activation function and $g_t$ is affine in $\theta$. We can simply decompose this into two steps
\begin{align*}
  &x'_{t+1} = g_t(x_t, \theta_t),\\
  &x'_{t+2} = \sigma(x'_{t+1}).
\end{align*}
Then, $x'_{t+2} = x_{t+1}$ but each of these two steps now satisfy the convexity constraint. 

Similarly, in residual networks, we can usually write the layer transformation as
\[
  x_{t+1} = x_{t} + h_t(\sigma(g_t(x_t, \theta_t)), \phi_t)
\]
where $g_t,h_t$ are maps affine in $\theta$ and $\phi$ respectively, and $\sigma$ is a non-trainable non-linearity. The above cannot be straightforwardly split into two layers as there is a shortcut connection from $x_n$. However, we can introduce auxillary variables $y_t$ and consider the 3-step decomposition
\begin{align*}
  &x'_{t+1} = g_t(x_t, \theta_t) & &y'_{t+1} = x_{t},  \\
  &x'_{t+2} = \sigma(x'_{t+1}) & &y'_{t+2} = y'_{t+1},  \\
  &x'_{t+3} = y'_{t+2} + h_t(x'_{t+2}, \phi_{t}) & &y'_{t+3} = y'_{t+2}.
\end{align*}
It is clear then that $x'_{t+3}$ is equal to $x_{t+1}$ in the residual network layer. Furthermore, this new decomposed system satisfy the convexity assumption as long as $\Theta_t$ is a convex set.

\section{Proof of Theorem~2}
\label{sec:proof_thm_2}
In this section, we prove Theorem~2 in the main text using some elementary estimates. 
Let us first prove a useful result. 

\begin{lemma}[Discrete Gronwall's Lemma]
\label{lem:gronwall}
Let $K\geq 0$ and $u_{t}$, $w_{t}$, be non-negative real valued sequences satisfying
\[
  u_{t+1} \leq K u_{t} + w_{t},
\]
for $t=0,\dots,T-1$. Then, we have for all $t=0,\dots,T$,
\[
  u_{t} \leq \max(1,K^T) \left( u_0 + \sum_{s=0}^{T-1} w_{s} \right).
\]
\end{lemma}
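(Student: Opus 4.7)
The plan is to iterate the recurrence to get an explicit closed form for $u_t$, and then bound the resulting geometric weights crudely by $\max(1,K^T)$. This is a standard discrete Gronwall-type argument and the absence of an additive constant inside the recurrence makes it especially clean.

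First I would prove, by induction on $t$, the closed-form bound
\begin{equation*}
  u_t \;\leq\; K^t\, u_0 \;+\; \sum_{s=0}^{t-1} K^{t-1-s}\, w_s,
  \qquad t=0,1,\dots,T.
\end{equation*}
The base case $t=0$ is immediate (empty sum). For the inductive step, I would apply the hypothesis $u_{t+1}\leq K u_t + w_t$, plug in the inductive bound for $u_t$, and observe that $K\cdot K^{t-1-s} = K^{t-s}$ so the sum re-indexes cleanly to yield the same form at $t+1$. The non-negativity of $u_t$, $w_t$, and $K$ is used only to preserve inequalities when multiplying by $K$.

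Next I would handle the two cases $K\leq 1$ and $K>1$ uniformly by noting that for $0\leq t\leq T$ and $0\leq s\leq t-1$ we have $K^t\leq \max(1,K^T)$ and $K^{t-1-s}\leq \max(1,K^T)$. Factoring $\max(1,K^T)$ out of both terms and extending the sum from $s\in\{0,\dots,t-1\}$ to $s\in\{0,\dots,T-1\}$ (valid since $w_s\geq 0$) gives the claimed inequality.

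I do not expect any real obstacle here: the only thing to be careful about is the case split at $K=1$, which the $\max(1,K^T)$ form absorbs automatically, and the fact that the extension of the summation range requires non-negativity of $w_s$, which is part of the hypothesis. No continuity, smoothness, or monotonicity assumptions on $u$ or $w$ are needed.
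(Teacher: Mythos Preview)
Your proposal is correct and follows essentially the same induction-based approach as the paper. The only cosmetic difference is that you first derive the sharp closed-form bound $u_t\le K^t u_0+\sum_{s=0}^{t-1}K^{t-1-s}w_s$ and then bound the powers of $K$, whereas the paper folds the $\max(1,K^t)$ factor into the induction hypothesis from the start; both routes are standard and equally short.
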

\begin{proof}
  We prove by induction the inequality 
  \begin{equation}
    u_{t} \leq \max(1,K^t) \left( u_0 + \sum_{s=0}^{t-1} w_{s} \right),
    \label{eq:gronwall_1}
  \end{equation}
  from which the lemma follows immediately. The case $t=0$ is trivial. 
  Suppose the above is true for some $t$, we have
  \begin{align*}
    u_{t+1} &\leq K u_{t} + w_{t} \\
    & \leq K \max(1,K^t) \left( u_0 + \sum_{s=0}^{t-1} w_{s} \right) + w_t \\
    & \leq \max(1,K^{t+1}) \left( u_0 + \sum_{s=0}^{t-1} w_{s} \right) + \max(1,K^{t+1}) w_t \\
    & = \max(1,K^{t+1}) \left( u_0 + \sum_{s=0}^{t} w_{s} \right).
  \end{align*}
  This proves~\eqref{eq:gronwall_1} and hence the lemma. 
\end{proof}

Let us now commence the proof of a preliminary lemma that estimates the magnitude of $\bfp_s^{\bftheta}$ for any $\bftheta\in\boldsymbol{\Theta}$. 
Hereafter, $C$ will be stand for any generic constant that does not depend on $\bftheta,\bfphi$ and $S$ (batch size), but may depend on other fixed quantities such as $T$ and the Lipschitz constants $K$ in (A1)-(A2). Also, the value of $C$ is allowed to change to another constant value with the same dependencies from line to line in order to reduce notational clutter. 

\begin{lemma}
\label{lem:p_bound}
  There exists a constant $C>0$ such that for each
  $t=0,\dots,T$ and $\bftheta\in\boldsymbol{\Theta}$, we have
  \[
    \Vert p^{\bftheta}_{s,t}\Vert\leq \frac{C}{S}.
  \]
  for all $s=1,\dots,S$. 
\end{lemma}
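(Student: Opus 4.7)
The plan is to unroll the co-state recursion backwards in time and apply the discrete Gronwall lemma just proved. Starting from the definition $H_t(x,p,\theta) = p\cdot f_t(x,\theta) - \tfrac{1}{S}L_t(x,\theta)$, the recursion $p^{\bftheta}_{s,t} = \nabla_x H_t(x^{\bftheta}_{s,t}, p^{\bftheta}_{s,t+1}, \theta_t)$ expands to
\begin{equation*}
  p^{\bftheta}_{s,t} = \nabla_x f_t(x^{\bftheta}_{s,t},\theta_t)^T p^{\bftheta}_{s,t+1} - \tfrac{1}{S}\nabla_x L_t(x^{\bftheta}_{s,t},\theta_t),
\end{equation*}
with terminal value $p^{\bftheta}_{s,T} = -\tfrac{1}{S}\nabla\Phi_s(x^{\bftheta}_{s,T})$. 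Taking norms and applying the triangle inequality gives a linear backward recursion of the form $\|p^{\bftheta}_{s,t}\| \leq A_t \|p^{\bftheta}_{s,t+1}\| + B_t/S$, which is exactly the form the Gronwall lemma consumes after time reversal.

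The next step is to show that the coefficients $A_t := \|\nabla_x f_t(x^{\bftheta}_{s,t},\theta_t)\|_2$ and $B_t := \|\nabla_x L_t(x^{\bftheta}_{s,t},\theta_t)\|$ are bounded by a constant $K'$ that is independent of $s$, $t$, and $\bftheta$. Since $x^{\bftheta}_{s,t}$ lies in $W_t$ (by the very definition of $W_t$), and $W_t$ is assumed bounded (as discussed after the statement of Theorem 2), the Lipschitz hypotheses in (A2), applied relative to a fixed reference point in $W_t$, yield a uniform bound on $\nabla_x f_t$ and $\nabla_x L_t$ over $W_t \times \Theta_t$. The same argument, using (A1), shows $\|\nabla\Phi_s(x^{\bftheta}_{s,T})\| \leq K'$ uniformly over $s$ and $\bftheta$, so that the base case gives $\|p^{\bftheta}_{s,T}\| \leq K'/S$.

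To conclude, I set $\tilde{u}_\tau := \|p^{\bftheta}_{s,T-\tau}\|$ for $\tau = 0,\dots,T$, so that the backward recursion becomes the forward recursion $\tilde{u}_{\tau+1} \leq K' \tilde{u}_\tau + K'/S$ in the form required by Lemma~\ref{lem:gronwall}, with initial data $\tilde{u}_0 \leq K'/S$. Invoking the lemma with $K=K'$ and $w_\tau \equiv K'/S$ yields
\begin{equation*}
  \tilde{u}_\tau \leq \max(1,(K')^T)\left(\tfrac{K'}{S} + T\cdot\tfrac{K'}{S}\right) = \tfrac{C}{S},
\end{equation*}
for a constant $C$ depending only on $K'$ and $T$. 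Translating back gives $\|p^{\bftheta}_{s,t}\| \leq C/S$ for every $t$, which is the claim.

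The only non-routine point is the uniform bound on $\nabla_x f_t$ and $\nabla_x L_t$ across $W_t \times \Theta_t$: (A2) provides Lipschitz continuity in $x$ but not an absolute bound, so one must combine it with the boundedness of $W_t$ (and implicitly of $\Theta_t$, together with some joint continuity) to pass from Lipschitz to uniformly bounded. Once that is in place, everything else is an almost mechanical backward induction folded into the Gronwall lemma, and the factor $1/S$ is preserved because it enters only through the terminal condition and through the additive $\nabla_x L_t/S$ term, both of which carry the same scaling.
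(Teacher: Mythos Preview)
Your proof is correct and follows essentially the same route as the paper: expand the co-state recursion, bound $\|\nabla_x f_t\|_2$ and $\|\nabla_x L_t\|$ uniformly, reverse time, and feed the resulting linear recursion into the discrete Gronwall lemma. One small simplification worth noting: the ``non-routine point'' you flag is in fact routine, since the Lipschitz bound on $f_t$ itself (not on $\nabla_x f_t$) contained in (A2) already forces $\|\nabla_x f_t(x,\theta)\|_2 \le K$ on $W_t$, and likewise the Lipschitz bound on $L_t$ gives $\|\nabla_x L_t\| \le K$ and that on $\Phi_s$ gives $\|\nabla\Phi_s\|\le K$; no appeal to boundedness of $W_t$ or joint continuity in $\theta$ is needed for this step.
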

\begin{proof}
  First, notice that $p_{s,T}^{\bftheta}=-\frac{1}{S}\nabla\Phi_s(x_{s,T}^{\bftheta})$
  and so by assumption (A1), we have
  \[
  \Vert p_{s,T}^{\bftheta}\Vert=\frac{1}{S}\Vert\nabla\Phi_s(x_{s,T}^{\bftheta})\Vert\leq \frac{K}{S}.
  \]
  Now, for each $0\leq t<T$, we have by Eq. (8) and assumption (A2) in the main text, 
  \begin{align*}
    \Vert p_{s,t}^{\bftheta}\Vert
    = & \Vert\nabla_{x}H_t(x_{s,t}^{\bftheta},p_{s,t+1}^{\bftheta},\theta_{t})\Vert \\
    \leq & \Vert {\nabla_{x}f_t(x_{s,t}^{\bftheta},\theta_{t})}^T p_{s,t+1}^{\bftheta}\Vert 
    + \frac{1}{S} \nabla_{x} \Vert L_t(x_{s,t}^{\bftheta},\theta_t) \Vert \\
    \leq & K \Vert p_{s,t+1}^{\bftheta}\Vert
    + \frac{K}{S} 
  \end{align*}
  Using Lemma~\ref{lem:gronwall} with $t\rightarrow T-t$, we get
  \[
    \Vert p_{s,t}^{\bftheta}\Vert \leq \max(1,K^T)( \frac{K}{S} + \frac{TK}{S} ) = \frac{C}{S}.
  \]
\end{proof}


We are now ready to prove Theorem~2. 

\begin{proof}[Proof of Theorem~2]
Recall the definition
\[
  H_t(x,p,\theta) = p\cdot f_t(x,\theta) - \frac{1}{S}L_t(x,\theta).
\]
Let us define the quantity
\[
  I(\bfx,\bfp,\bftheta)
  :=\sum_{t=0}^{T-1}p_{t+1}\cdot x_{t+1}-H_t(x_{t},p_{t+1},\theta_{t}) - L_t(x_{t},\theta_{t})
\]
Then, from Eq. (7) from the main text, we know that $I(\bfx_s^{\bftheta},\bfp_s^{\bftheta},\bftheta)=0$
for any $s=1,\dots,S$ and $\bftheta\in\boldsymbol{\Theta}$. 
Let us now fix some sample $s$ and obtain corresponding estimates. We have
\begin{align}
  0 
  = & I(\bfx_s^{\bfphi},\bfp_s^{\bfphi},\bfphi)-I(\bfx_s^{\bftheta},\bfp_s^{\bftheta},\bftheta)\nonumber \\
  = & \sum_{t=0}^{T-1}
  p_{s,t+1}^{\bfphi}\cdot x_{s,t+1}^{\bfphi}-p_{s,t+1}^{\bftheta}\cdot x_{s,t+1}^{\bftheta}\nonumber \\
  & -\frac{1}{S}\sum_{t=0}^{T-1}
  L_t(x_{s,t}^{\bfphi},\phi_{t})-L_t(x_{s,t}^{\bftheta},\theta_{t})\nonumber \\
  & -\sum_{t=0}^{T-1}
  H_t(x_{s,t}^{\bfphi},p_{s,t+1}^{\bfphi},\phi_{t})-H_t(x_{s,t}^{\bftheta},p_{s,t+1}^{\bftheta},\phi_{t})\label{eq:lem_equality}
\end{align}
We can rewrite the first term on the right hand side as
\begin{align}
  &\sum_{t=0}^{T-1}p_{s,t+1}^{\bfphi}\cdot x_{s,t+1}^{\bfphi}-p_{s,t+1}^{\bftheta}\cdot x_{s,t+1}^{\bftheta} \nonumber\\
  =&\sum_{t=0}^{T-1}p_{s,t+1}^{\bftheta}\cdot\delta x_{s,t+1}
  +x_{s,t+1}^{\bftheta} \cdot \delta p_{s,t+1}
  +\delta x_{s,t+1} \cdot \delta p_{s,t+1},
  \label{eq:lem_est_0}
\end{align}
where we have defined $\delta x_{s,t}:=x_{s,t}^{\bfphi}-x_{s,t}^{\bftheta}$
and $\delta p_{s,t}:=p_{s,t}^{\bfphi}-p_{s,t}^{\bftheta}$. We may simplify further
by observing that $\delta x_{s,0}=0$, and so
\begin{align*}
  \sum_{t=0}^{T-1} p_{s,t+1}^{\bftheta}\cdot\delta x_{s,t+1}+x_{s,t+1}^{\bftheta} \cdot \delta p_{s,t+1} 
  = & p_{s,T}^{\bftheta}\cdot\delta x_{s,T}
  +\sum_{t=0}^{T-1}p_{s,t}^{\bftheta}\cdot\delta x_{s,t}+x_{s,t+1}^{\bftheta} \cdot \delta p_{s,t+1}\\
  = & p_{s,T}^{\bftheta}\cdot\delta x_{s,T}
  +\sum_{t=0}^{T-1}\nabla_{x}H_t(x_{s,t}^{\bftheta},p_{s,t+1}^{\bftheta},\theta_{t})\cdot\delta x_{s,t}\\
  & +\sum_{t=0}^{T-1}\nabla_{p}H_t(x_{s,t}^{\bftheta},p_{s,t+1}^{\bftheta},\theta_{t})\cdot\delta p_{s,t+1}
\end{align*}
By defining the extended vector $z_{s,t}^{\bftheta}:=(x_{s,t}^{\bftheta},p_{s,t+1}^{\bftheta})$,
we can rewrite this as
\begin{align}
\sum_{t=0}^{T-1} p_{s,t+1}^{\bftheta}\cdot\delta x_{s,t+1}+x_{s,t+1}^{\bftheta} \cdot \delta p_{s,t+1} 
= & p_{s,T}^{\bftheta}\cdot\delta x_{s,T} +\sum_{t=0}^{T-1}\nabla_{z}H_t(z_{s,t}^{\bftheta},\theta_{t})\cdot\delta z_{s,t}\label{eq:lem_est_1}
\end{align}
Similarly, we also have
\begin{align}
\sum_{t=0}^{T-1}\delta x_{s,t+1}\cdot\delta p_{s,t+1} = & \frac{1}{2}\sum_{t=0}^{T-1}\delta x_{s,t+1}\cdot\delta p_{s,t+1}+\frac{1}{2}\sum_{t=0}^{T-1}\delta x_{s,t+1}\cdot\delta p_{s,t+1}\nonumber \\
  = & \frac{1}{2}\delta x_{s,T}\cdot\delta p_{s,T}\nonumber \\
  & +\frac{1}{2}\sum_{t=0}^{T-1}(\nabla_{z}H_t(z_{s,t}^{\bfphi},\phi_{t})-\nabla_{z}H_t(z_{s,t}^{\bftheta},\theta_{t}))\cdot\delta z_{s,t}\nonumber \\
  = & \frac{1}{2}\delta x_{s,T}\cdot\delta p_{s,T}\nonumber \\
  & +\frac{1}{2}\sum_{t=0}^{T-1}(\nabla_{z}H_t(z_{s,t}^{\bftheta},\phi_{t})-\nabla_{z}H_t(z_{s,t}^{\bftheta},\theta_{t}))\cdot\delta z_{s,t}\nonumber \\
  & +\frac{1}{2}\sum_{t=0}^{T-1}\delta z_{s,t}\cdot\nabla_{z}^{2}H_t(z_{s,t}^{\bftheta}+r_{1}(t)\delta z_{s,t},\phi_{t})\delta z_{s,t}\label{eq:lem_est_2}
\end{align}
where in the last line we used Taylor's theorem with $r_{1}(t)\in\left[0,1\right]$
for each $t$. Now, we can rewrite the terminal terms (i.e. $T$ terms) in (\ref{eq:lem_est_1})
and (\ref{eq:lem_est_2}) as follows:
\begin{align}
  &(p_{s,T}^{\bftheta}+\frac{1}{2}\delta p_{s,T})\cdot\delta x_{s,T} \nonumber \\
  = & -\frac{1}{S}\nabla\Phi_s(x_{s,T}^{\bftheta})\cdot\delta x_{s,T}
  -\frac{1}{2S}(\nabla\Phi_s(x_{s,T}^{\bfphi})-\nabla\Phi_s(x_{s,T}^{\bftheta}))\cdot\delta x_{s,T}\nonumber \\
  = & -\frac{1}{S}\nabla\Phi_s(x_{s,T}^{\bftheta})\cdot\delta x_{s,T}-
  \frac{1}{2S}\delta x_{s,T}\cdot\nabla^{2}\Phi_s(x_{s,T}^{\bftheta}+r_{2}\delta x_{s,T})\delta x_{s,T}\nonumber \\
  = & -\frac{1}{S}(\Phi_s(x^{\bfphi}_T)-\Phi_s(x^{\bftheta}_T)) -\frac{1}{2S}\delta x_{s,T}\cdot[\nabla^{2}\Phi_s(x_{s,T}^{\bftheta}+r_{2}\delta x_{s,T}) 
  +\nabla^{2}\Phi_s(x_{s,T}^{\bftheta}+r_{3}\delta x_{s,T})]\delta x_{s,T}
  \label{eq:lem_est_3}
\end{align}
for some $r_{2},r_{3}\in[0,1]$. Lastly, for each $t=0,1,\dots,T-1$ we have
\begin{align}
  H_t(z_{s,t}^{\bfphi},\phi_{t})-H_t(z_{s,t}^{\bftheta},\theta_{t}) 
  = & H_t(z_{s,t}^{\bftheta},\phi_{t})-H_t(z_{s,t}^{\bftheta},\theta_{t})\nonumber \\
  & +\nabla_{z}H_t(z_{s,t}^{\bftheta},\phi_{t})\cdot\delta z_{s,t}\nonumber \\
  & +\frac{1}{2}\delta z_{s,t}\cdot\nabla_{z}^{2}H_t(z_{s,t}^{\bftheta}+r_{4}(t)\delta z_{s,t},\phi_{t})\delta z_{s,t}
  \label{eq:lem_est_4}
\end{align}
where $r_{4}(t)\in[0,1]$. 
  
Substituting Eq.~(\ref{eq:lem_est_0},~\ref{eq:lem_est_1},~\ref{eq:lem_est_2},~\ref{eq:lem_est_3},~\ref{eq:lem_est_4})
into Eq.~\eqref{eq:lem_equality} yields
\begin{align}
  &\frac{1}{S}\left[\Phi_s(x^{\bfphi}_{s,T})+\sum_{t=0}^{T-1}L_t(x^{\bfphi}_{s,t},\phi_{t})\right] 
  - \frac{1}{S}\left[\Phi_s(x^{\bftheta}_{s,T})+\sum_{t=0}^{T-1}L_t(x^{\bftheta}_{s,t},\theta_{t})\right] \nonumber \\
  = & -\sum_{t=0}^{T-1} H_t(x^{\bftheta}_t,p^{\bftheta}_{t+1},\phi_{t}) - H_t(x^{\bftheta}_t,p^{\bftheta}_{t+1},\theta_{t}) \nonumber \\
  & +\frac{1}{2S}\delta x_{s,T}\cdot(\nabla^{2}\Phi_s(x_{s,T}^{\bftheta}+r_{2}\delta x_{s,T})+\nabla^{2}\Phi_s(x_{s,T}^{\bftheta}+r_{3}\delta x_{s,T}))\delta x_{s,T}\nonumber \\
  & +\frac{1}{2}\sum_{t=0}^{T-1}(\nabla_{z}H_t(z_{s,t}^{\bftheta},\phi_{t})-\nabla_{z}H_t(z_{s,t}^{\bftheta},\theta_{t}))\cdot\delta z_{s,t}\nonumber \\
  & +\frac{1}{2}\sum_{t=0}^{T-1}\delta z_{s,t}\cdot(\nabla_{z}^{2}H_t(z_{s,t}^{\bftheta}+r_{1}(t)\delta z_{s,t},\phi_{t})-\nabla_{z}^{2}H_t(z_{s,t}^{\bftheta}+r_{4}(t)\delta z_{s,t},\phi_{t}))\delta z_{s,t}
  \label{eq:lem_equality_subbed}
\end{align}
Note that by summing over all $s$, the left hand side is simply $J(\bfphi)-J(\bftheta)$. 
Let us further simplify the right hand side. 
First, by (A1), we have
\begin{equation}
  \delta x_{s,T}\cdot(\nabla^{2}\Phi_s(x_{s,T}^{\bftheta}+r_{2}\delta x_{s,T})+\nabla^{2}\Phi_s(x_{s,T}^{\bftheta}+r_{3}\delta x_{s,T}))\delta x_{s,T}
  \leq K \Vert \delta x_{s,T} \Vert^2.
  \label{eq:thm_2_rhs_1}
\end{equation}
Next, 
\begin{align}
  &(\nabla_{z}H_t(z_{s,t}^{\bftheta},\phi_{t})-\nabla_{z}H_t(z_{s,t}^{\bftheta},\theta_{t}))\cdot\delta z_{s,t} \nonumber \\
  \leq & \Vert \nabla_x H_t(x_{s,t}^{\bftheta},p_{s,t+1}^{\bftheta},\phi_{t}) - \nabla_x H_t(x_{s,t}^{\bftheta},p_{s,t+1}^{\bftheta},\theta_{t}) \Vert \Vert \delta x_{s,t} \Vert \nonumber \\
  & + \Vert \nabla_p H_t(x_{s,t}^{\bftheta},p_{s,t+1}^{\bftheta},\phi_{t}) - \nabla_p H_t(x_{s,t}^{\bftheta},p_{s,t+1}^{\bftheta},\theta_{t}) \Vert \Vert \delta p_{s,t+1} \Vert \nonumber \\
  \leq & \frac{1}{2S} \Vert \delta x_{s,t} \Vert^2 + \frac{S}{2} \Vert \nabla_x H_t(x_{s,t}^{\bftheta},p_{s,t+1}^{\bftheta},\phi_{t}) - \nabla_x H_t(x_{s,t}^{\bftheta},p_{s,t+1}^{\bftheta},\theta_{t}) \Vert^2 \nonumber \\
  & + \frac{S}{2} \Vert \delta p_{s,t} \Vert^2 + \frac{1}{2S} \Vert \nabla_p H_t(x_{s,t}^{\bftheta},p_{s,t+1}^{\bftheta},\phi_{t}) - \nabla_p H_t(x_{s,t}^{\bftheta},p_{s,t+1}^{\bftheta},\theta_{t}) \Vert^2 \nonumber \\
  \leq & \frac{1}{2S} \Vert \delta x_{s,t} \Vert^2 + \frac{C^2}{2S} \Vert \nabla_x f_t(x_{s,t}^{\bftheta},\phi_{t}) - \nabla_x f_t(x_{s,t}^{\bftheta},\theta_{t}) \Vert^2 \nonumber \\
  & + \frac{1}{2S} \Vert \nabla_x L_t(x_{s,t}^{\bftheta},\phi_{t}) - \nabla_x L_t(x_{s,t}^{\bftheta},\theta_{t}) \Vert^2 \nonumber \\
  & + \frac{S}{2} \Vert \delta p_{s,t} \Vert^2 + \frac{1}{2S} \Vert f_t(x_{s,t}^{\bftheta},\phi_{t}) -  f_t(x_{s,t}^{\bftheta},\theta_{t}) \Vert^2,
  \label{eq:thm_2_rhs_2}
\end{align}
where in the last line we have used Lemma~\ref{lem:p_bound}. Similarly, we can simplify the last term in~\eqref{eq:lem_equality_subbed}. Notice that the second derivative of $H_t$ with respect to $p$ vanishes since it is linear. Hence, as in Eq.~\eqref{eq:thm_2_rhs_1} and using Lemma~\ref{lem:p_bound}, we have
\begin{align}
  & \delta z_{s,t}\cdot(\nabla_{z}^{2}H_t(z_{s,t}^{\bftheta}+r_{1}(t)\delta z_{s,t},\phi_{t})-\nabla_{z}^{2}H_t(z_{s,t}^{\bftheta}+r_{4}(t)\delta z_{s,t},\phi_{t}))\delta z_{s,t} \nonumber \\
  \leq & \frac{2KC}{S} \Vert \delta x_{s,t} \Vert^2 + 4K \Vert \delta x_{s,t} \Vert \Vert \delta p_{s,t+1} \Vert \nonumber \\
  \leq & \frac{2KC}{S} \Vert \delta x_{s,t} \Vert^2 + \frac{2K}{S} \Vert \delta x_{s,t} \Vert^2
  + 2KS \Vert \delta p_{s,t+1} \Vert^2
  \label{eq:thm_2_rhs_3}
\end{align}
Substituting Eq.~(\ref{eq:thm_2_rhs_1},\ref{eq:thm_2_rhs_2},\ref{eq:thm_2_rhs_3}) into~\eqref{eq:lem_equality_subbed} and summing over $s$, we have (renaming constants)
\begin{align}
  &\frac{1}{S}\left[\Phi_s(x^{\bfphi}_{s,T})+\sum_{t=0}^{T-1}L_t(x^{\bfphi}_{s,t},\phi_{t})\right] 
  - \frac{1}{S}\left[\Phi_s(x^{\bftheta}_{s,T})+\sum_{t=0}^{T-1}L_t(x^{\bftheta}_{s,t},\theta_{t})\right] \nonumber \\
  = & -\sum_{t=0}^{T-1}
  H_t(x^{\bftheta}_t,p^{\bftheta}_{t+1},\phi_{t}) - H_t(x^{\bftheta}_t,p^{\bftheta}_{t+1},\theta_{t}) \nonumber \\
  & +\frac{C}{S} \sum_{t=0}^{T} \Vert \delta x_{s,t} \Vert^2
  +CS \sum_{t=0}^{T-1} \Vert \delta p_{s,t+1} \Vert^2 \nonumber \\ 
  & + \frac{C}{S} \sum_{t=0}^{T-1}
  \Vert f_t(x_{s,t}^{\bftheta},\phi_{t}) -  f_t(x_{s,t}^{\bftheta},\theta_{t}) \Vert^2 \nonumber \\
  & + \frac{C}{S} \sum_{t=0}^{T-1}
  \Vert \nabla_x f_t(x_{s,t}^{\bftheta},\phi_{t}) - 
  \nabla_x f_t(x_{s,t}^{\bftheta},\theta_{t}) \Vert^2 \nonumber \\
  & + \frac{C}{S} \sum_{t=0}^{T-1}
  \Vert \nabla_x L_t(x_{s,t}^{\bftheta},\phi_{t}) - \nabla_x L_t(x_{s,t}^{\bftheta},\theta_{t}) \Vert^2
  \label{eq:lem_equality_subbed_simplified}
\end{align}
It remains to estimate the magnitudes of $\delta x_{s,t}$ and $\delta p_{s,t}$. Observe that $\delta x_{s,0}=0$, hence we have for each $t=0,\dots,T-1$
\begin{align}
  \Vert \delta x_{s,t+1} \Vert 
  \leq & \Vert f_t(x_{s,t}^{\bfphi},\phi_{t}) -  f_t(x_{s,t}^{\bftheta},\phi_{t}) \Vert
  + \Vert f_t(x_{s,t}^{\bftheta},\phi_{t}) -  f_t(x_{s,t}^{\bftheta},\theta_{t}) \Vert \nonumber \\
  \leq & K \Vert \delta x_{s,t} \Vert + \Vert f_t(x_{s,t}^{\bftheta},\phi_{t}) -  f_t(x_{s,t}^{\bftheta},\theta_{t}) \Vert \nonumber
\end{align}
Using Lemma~\ref{lem:gronwall}, we have
\begin{equation}
  \Vert \delta x_{s,t} \Vert \leq C \sum_{t=0}^{T-1} \Vert f_t(x_{s,t}^{\bftheta},\phi_{t}) -  f_t(x_{s,t}^{\bftheta},\theta_{t}) \Vert
  \label{eq:thm_2_dx_est}
\end{equation}
Similarly, 
\begin{align}
  \Vert \delta p_{s,t} \Vert 
  \leq & \Vert \nabla_x H_t(x_{s,t}^{\bfphi},p_{s,t+1}^{\bfphi},\phi_{t})
  - \nabla_x H_t(x_{s,t}^{\bftheta},p_{s,t+1}^{\bftheta},\theta_{t}) \Vert \nonumber \\
  \leq & 2K \Vert \delta p_{s,t+1} \Vert + \frac{C}{S} \Vert \delta x_{s,t} \Vert \nonumber \\
  & + \frac{C}{S} \Vert \nabla_x f_t(x_{s,t}^{\bftheta},\phi_{t}) -  \nabla_x f_t(x_{s,t}^{\bftheta},\theta_{t}) \Vert_2 \nonumber \\
  & + \frac{C}{S} \Vert \nabla_x L_t(x_{s,t}^{\bftheta},\phi_{t}) -  \nabla_x L_t(x_{s,t}^{\bftheta},\theta_{t}) \Vert, \nonumber
\end{align}
and so by Lemma~\ref{lem:gronwall}, Eq.~\eqref{eq:thm_2_dx_est} and the fact that 
$\Vert \delta p_{T,s}\Vert \leq \tfrac{K}{S} \Vert\delta x_{T,s}\Vert$ (by (A1)), we have
\begin{align}
  \Vert \delta p_{s,t} \Vert \leq & \frac{C}{S} \sum_{t=0}^{T-1} \Vert f_t(x_{s,t}^{\bftheta},\phi_{t}) -  f_t(x_{s,t}^{\bftheta},\theta_{t}) \Vert \nonumber \\
  & + \frac{C}{S} \sum_{t=0}^{T-1} \Vert \nabla_x f_t(x_{s,t}^{\bftheta},\phi_{t}) - \nabla_x f_t(x_{s,t}^{\bftheta},\theta_{t}) \Vert_2 \nonumber \\
  & + \frac{C}{S} \sum_{t=0}^{T-1} \Vert \nabla_x L_t(x_{s,t}^{\bftheta},\phi_{t}) - \nabla_x L_t(x_{s,t}^{\bftheta},\theta_{t}) \Vert.
  \label{eq:thm_2_dp_est}
\end{align}
Finally, we conclude the proof of Theorem~2 by substituting estimates~\eqref{eq:thm_2_dx_est} and~\eqref{eq:thm_2_dp_est} into~\eqref{eq:lem_equality_subbed_simplified} and summing over $s$.
\end{proof}

\section{Gradient Descent with Back-propagation as a modification of MSA}
\label{sec:backprop_vs_msa}
Here we show that the classical gradient-descent algorithm where the gradients are computed using back-propagation~\cite{le1988theoretical} is a modification of the MSA. This was originally discussed in~\cite{li2017maximum}.
As discussed in the main paper, the reason MSA may diverge is if the arg-max step is too drastic such that the non-negative penalty terms dominate. One simple way is to make the arg-max step infinitesimal, in the appropriate direction, provided such updates provide feasible solutions. In other words, if we assume differentiability with respect to $\theta$ for all $f_t$ and that $\Theta_t$ is the whole Euclidean space, we may substitute the arg-max step with a steepest ascent step
\begin{equation}
  \theta^1_t = \theta^0_t + \eta \nabla_\theta \sum_{s=1}^{S} 
  H_t(x^{\bftheta^0}_{s,t}, p^{\bftheta^0}_{s,t+1}, \theta^0_t),
  \label{eq:grad_ascent}  
\end{equation}
for small small learning rate $\eta>0$. We show the following:
\begin{proposition}
  The MSA (Alg.~1 in the main text) with the maximization step replaced by~\eqref{eq:grad_ascent} is equivalent to gradient-descent with back-propagation on $J$. 
\end{proposition}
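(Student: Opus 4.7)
The plan is to establish that the key formula $\nabla_{\theta_t} J(\bftheta) = -\nabla_\theta \sum_{s=1}^{S} H_t(x_{s,t}^{\bftheta}, p_{s,t+1}^{\bftheta}, \theta_t)$ holds, and then observe that both the state and co-state recursions in Alg.~1 are exactly the forward and backward passes of back-propagation. First, I would interpret the co-state $-p_{s,t}^{\bftheta}$ as $\nabla_{x_{s,t}} G_s$, where $G_s(x, \theta_{t:T-1}) := \tfrac{1}{S}[\Phi_s(x_T) + \sum_{t' = t}^{T-1} L_{t'}(x_{t'}, \theta_{t'})]$ is the tail of the $s$-th sample's contribution to $J$ viewed as a function of the intermediate state. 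The terminal condition $p_{s,T}^{\bftheta} = -\tfrac{1}{S}\nabla\Phi_s(x_{s,T}^{\bftheta})$ matches $-\nabla_{x_{s,T}} G_s$, and the backward recursion $p_{s,t}^{\bftheta} = \nabla_x f_t(x_{s,t}^{\bftheta},\theta_t)^T p_{s,t+1}^{\bftheta} - \tfrac{1}{S}\nabla_x L_t(x_{s,t}^{\bftheta},\theta_t)$ is exactly the chain-rule recursion for $-\nabla_{x_{s,t}} G_s$.

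Next, I would differentiate $J$ with respect to $\theta_t$. The only explicit dependence on $\theta_t$ is through $L_t(x_{s,t}, \theta_t)$ and through the subsequent states $x_{s,t+1}, \dots, x_{s,T}$ via $f_t$. Splitting into the direct and indirect contributions and applying the identification above gives
\begin{align*}
\nabla_{\theta_t} J(\bftheta)
&= \tfrac{1}{S}\sum_s \nabla_\theta L_t(x_{s,t}^{\bftheta}, \theta_t)
+ \sum_s \nabla_\theta f_t(x_{s,t}^{\bftheta},\theta_t)^T(-p_{s,t+1}^{\bftheta}) \\
&= -\sum_s \nabla_\theta H_t(x_{s,t}^{\bftheta}, p_{s,t+1}^{\bftheta}, \theta_t),
\end{align*}
using the definition $H_t(x,p,\theta) = p\cdot f_t(x,\theta) - \tfrac{1}{S}L_t(x,\theta)$. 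Substituting this identity into the modified update~\eqref{eq:grad_ascent} yields $\theta_t^1 = \theta_t^0 - \eta\,\nabla_{\theta_t} J(\bftheta^0)$, which is precisely one step of gradient descent on $J$ with learning rate $\eta$.

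Finally, I would close the loop by noting that the forward sweep in Alg.~1 is identical to a forward pass of the network, and the backward recursion for $p_{s,t}^{\bftheta}$ is, per the chain-rule derivation above, the standard reverse-mode back-propagation of $\tfrac{1}{S}\sum_s \Phi_s + \tfrac{1}{S}\sum_{s,t'} L_{t'}$; thus the modified MSA performs exactly the computations of gradient descent with back-propagation. There is essentially no obstacle beyond careful book-keeping of the signs and the $1/S$ normalization; the main conceptual point, already implicit in the definition of $H_t$, is simply that the co-state is the adjoint sensitivity so that $\nabla_\theta H_t$ coincides (up to a sign) with the per-layer parameter gradient produced by back-propagation.
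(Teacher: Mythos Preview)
Your proposal is correct and follows essentially the same approach as the paper: identify the co-state $p^{\bftheta}_{s,t}$ with (minus) the gradient of the tail cost with respect to the intermediate state, then use the chain rule to show $\nabla_{\theta_t} J(\bftheta) = -\sum_s \nabla_\theta H_t(x^{\bftheta}_{s,t}, p^{\bftheta}_{s,t+1}, \theta_t)$, from which the equivalence with gradient descent is immediate. The only cosmetic difference is that the paper first reduces to the case $L_t\equiv 0$ by augmenting the state (as in Appendix~A), whereas you keep the running cost $L_t$ explicit throughout; your direct treatment is arguably cleaner and avoids the extra reduction step.
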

\begin{proof}
	As in Appendix A, WLOG we can assume $L\equiv 0$ by redefining coordinates. We have the following form for the Hamiltonian of the sample $s$
	\[
    H_t(x^{\bftheta}_{s,t}, p^{\bftheta}_{s,t+1}, \theta_t) 
    = p^{\bftheta}_{s,t+1} \cdot f(x^{\bftheta}_{s,t},\theta_t), 
	\]
	and the total loss function is $J(\bftheta) = \tfrac{1}{S}\sum_{s=1}^{S} \Phi_s(x^{\bftheta}_{s,T})$. It is easy to see that $p^{\bftheta}_{s,t} = -\tfrac{1}{S}\nabla_{x^{\bftheta}_{s,t}} \Phi_s(x^{\bftheta}_{s,T})$ (here $\nabla_{x^{\bftheta}_{s,t}}$ is the total derivative) by working backwards from $t=T$ and the fact that $\nabla_{x^{\bftheta}_{s,t}} x^{\bftheta}_{s,t+1} = \nabla_x f_t(x^{\bftheta}_{s,t},\theta_t)$. Hence, 
	\begin{align*}
  \nabla_{\theta_t} J(\bftheta) 
  =& \frac{1}{S}\sum_{s=1}^S \nabla_{x^{\bftheta}_{s,t+1}} \Phi_s(x^{\bftheta}_{s,T}) \cdot \nabla_{\theta_t} x^{\bftheta}_{s,t+1} \\
	=& \sum_{s=1}^S - p^{\bftheta}_{s,t+1} \cdot \nabla_{\theta_t} f_t(x^{\bftheta}_{s,t}, \theta_t)\\
	=& - \nabla_{\theta} \sum_{s=1}^{S} H_t(x^{\bftheta}_{s,t}, p^{\bftheta}_{s,t+1}, \theta_t)
	\end{align*}
	Hence,~\eqref{eq:grad_ascent} is simply the gradient descent step
	\[
	  \theta^{k+1}_t = \theta^k_t - \eta \nabla_{\theta_t} J(\bftheta^k).
	\]
\end{proof}
Thus, we have shown that the classical gradient descent algorithm constitute a modification of the MSA where the arg-max step is replaced by a gradient ascent step, so that (10) dominates the penalty terms in Theorem~2 in the main text (one can see this by observing that the penalty terms are now $\mathcal{O}(\eta^2)$ but the gains from steepest ascent is $\mathcal{O}(\eta)$). However, differentiability must be assumed, and moreover, $\theta^{k+1}_t$ must also be admissable, i.e.~belong to $\Theta_t$. If either condition is violated, the modification is not valid. 

\section{Implementation and Model Details}
\label{sec:model_alg_details}
A Tensorflow implementation of the binary and ternary MSA algorithm, together with code to reproduce our results are found at 
\begin{center}
  \url{https://github.com/LiQianxiao/discrete-MSA}  
\end{center}

\subsection{MSA for Binary-weight Neural Networks}
We give additional implementation details of our binary network algorithm (Alg.~2 in the main text), which is essentially Alg.~1 with the parameter update step replaced by (16). One extra step is to also keep and update an exponential moving average of $M^{\bftheta_k}_t$ and use the averaged value to update our parameters. Note that in applications, we may have some floating-point precision layers (e.g.~batch normalization layers), in which case the simplest way is to just train them using gradient descent. Also, Alg.~2 assumed that binary layers are fully-connected networks. For convolution networks, to compute $M^{\bftheta}_t$, we simply have to take gradient of $H_t$ with respect to $\theta$ (noting that $H$ is linear in $\theta$) to obtain the corresponding quantity. Before we discuss the choice of hyper-parameters in Sec.~\ref{sec:hyper-parameters}, we first give an argument for the convergence of the binary MSA algorithm in a simple setting. 

\subsubsection{Convergence of the Binary MSA for a Simple Problem}
Let us show informally that Alg.~2 in the main text converges, with an appropriate choice of regularization parameter, for a simple binary linear regression problem. The motivation here is show the importance of the added regularization terms involving $\rho_{k,t}$. 

Consider a simple linear regression problem (i.e.~linear network with $T=1$) in which the unique solution is a Binary matrix. For $s=1,\dots,S$, let $x_{s,0}\in\R^{d_0}$ be independent and have independent and identically distributed components with mean $0$ and variance $1$. These are the training samples. We shall consider the full-batch version so no exponential moving averages are applied. 

Let $\theta_0^* \in {\{-1,+1\}}^{d_0\times d_1}$ be the ground-truth, and so our regression targets are $y_s = \theta_0^* x_{s,0}$. Define the sample loss function
\[
  \Phi_s(x) := \frac{1}{2}\Vert y_s - x \Vert^2.
\]
At the $k^\text{th}$ iteration, let us denote the error vector $\delta \theta^k_0 = \theta^*_0 - \theta^k_0$. Then,
using the update rules in Alg.~2, we have
\[
  x^{\bftheta^k}_{s,1}=\theta_0^k x_{s,0} 
  \qquad
  p^{\bftheta^k}_{s,1}=\frac{1}{S}\delta\theta^k_0 x_{s,0}
\]
and so 
\[
  H_0(x^{\bftheta^k}_{s,0}, p^{\bftheta^k}_{s,1}, \theta_0) 
  = \frac{1}{S} \delta\theta^k_0x_{s,0} \cdot \theta_0 x_{s,0}
\]
The update step is then
\[
  [\theta^{k+1}_0]_{ij} = \begin{cases}
    \sgn([\delta \theta^k_0 G_S]_{ij}) 
    & \vert [\delta \theta^k_0 G_S]_{ij} \vert \geq 2S \rho_{k,0} \\
    [\theta^{k}_0]_{ij} & \text{otherwise}
  \end{cases}  
\]
where $G_S:=\tfrac{1}{S}\sum_{s=1}^S x_{s,0} x_{s,0}^T$. For large $S$, by the central limit theorem $G_S$ is approximately the identity matrix plus a small perturbation that is $\mathcal{O}(1/\sqrt{S})$ (valid for small perturbations only, large deviations will have to be bounded carefully by concentration inequalities or precise asymptotics~\cite{den2008large,boucheron2013concentration}). 
Therefore, $\delta \theta^k_0 G_S = \delta \theta^k_0 + \mathcal{O}(\Vert\delta \theta^k_0\Vert_F/\sqrt{S})$.
Taking the sign, we see that we get the correct answer (i.e.~$\delta \theta^{k+1}_1=0$) if 
$\Vert\delta \theta^k_0\Vert_F S^{-3/2} \ll \rho_{k,0} \ll \Vert\delta \theta^k_0\Vert_F S^{-1}$. Since $\Vert\delta \theta^k_0\Vert_F$ decreases as optimization proceeds, this also shows that we need to decrease $\rho_{k,t}$ as $k$ increases. 

Note that if we took the naive, unstablized MSA with $\rho_{k,0}\equiv0$ (i.e.~Alg.~1 in the main text), then it is clear that a coordinate that has the right sign ($[\delta \theta^k_0]_{ij}=0$) will continue to fluctuate because of the random signs introduced by the $\mathcal{O}(\Vert\delta \theta^k_0\Vert_F/\sqrt{S})$ term, and hence will not converge. This shows the importance of the regularization term in our algorithm. 

\subsubsection{Choice of Hyperparameters}
\label{sec:hyper-parameters}
Note that all constant factors multiplied to the hyper-parameters can be absorbed into the hyper-parameters themselves when implementing the algorithms. Hence in the following, $\rho_{k,t}$ represents the value of $2\rho_{k,t}$ in Alg.~2. 

The preceding example also shows that the regularization parameter $\rho_{k,t}$ should be suitably decreased as the optimization proceeds. We found a good heuristic is to simply set $\rho_{k,t}$ to be a constant fraction of the maximum absolute value of the components of $M^{\bftheta^k}_t$ that is not of the same sign as $\theta^k_t$. For the binary experiment, we take this constant fraction to be 0.5 for all layers.

Another hyper-parameter is the exponential moving average parameter, $\alpha_t$, which we take to be $0.999$ in all experiments. We also decay it (i.e.~making it closer to 1) as the iterations proceed. 

\subsubsection{Model Details for Experiments}
For ease of comparison, we have used almost identical set-ups as in~\cite{courbariaux2015binaryconnect}. The only difference is that we ignore the bias terms in all binary layers, resulting in slightly fewer parameters.

For the MNIST experiment, we optimize a (3xFC2048)-FC10 fully connected network. For CIFAR-10, we consider a convolutional neural network with (2xConv128)-2x2maxpool-(2xConv256)-2x2maxpool-(2xConv512)-2x2maxpool-(2xFC1024)-FC10. Lastly, for SVHN, we use the same network as CIFAR-10, but with half the number of channels in the convolution layers. All networks used ReLU activations and square-smoothed hinge loss. 
Note that the ReLU activation and the square-smoothed hinge loss are not twice differentiable, so technically it does not satisfy the assumptions in
Theorem~2. Nevertheless, we observe that the algorithm converges. Also, we tested other activations (e.g.~soft-plus, tanh) and losses (soft-max with cross entropy) and the results are similar. 
Batch-normalization is added after each affine transformation and before the non-linearity. Binary layers are trained according to Alg.~2, but batch-normalization layers have floating-point weights, and hence are trained by Adam optimizer~\cite{kingma2014adam} for simplicity. In all our experiments, no preprocessing steps are used other than scaling all input values to be between 0 and 1. We have checked that using different set-ups (e.g.~cross-entropy loss, different network structures) does not generally require retuning the parameters and the algorithm performs well. Note that however, we found that batch normalization layers are quite necessary for obtaining good performance in our algorithms, as is also the case in~\cite{courbariaux2015binaryconnect}. In the main text, Theorem 2 justifies this to a certain extent, by requiring the inputs fed to be $\mathcal{O}(1)$. 

In our comparisons with BinaryConnect~\cite{courbariaux2015binaryconnect}, we used the original code published at \url{https://github.com/MatthieuCourbariaux/BinaryConnect} with the only difference being that we changed the inference step to use binary weights (instead of full precision weights). Note that there are quite a number of regularization techniques employed here. To check their effects on the training loss, we ran the BinaryConnect code without stochastic binarization etc., but the training graphs are generally similar, hence we omit them here. 

\subsection{MSA for Ternary-weight Neural Networks}
The model setups for ternary-weight neural networks are identical as the binary ones, except we also have a parameter $\lambda_t$ for each layer that promotes sparsity. We take $\lambda_t$=1e-7 for all $t$ and all experiments. It is expected that large values will lead to sparser solutions, but with worse accuracy. We did not tune this value to find the best sparsity-performance trade-off. This is worthy of future exploration. The other hyperparameter choices are mostly identical as in the Binary case, except we take $\rho_{k,t}$ to be a smaller fraction at 0.25. 

\bibliography{ref}
\bibliographystyle{icml2018}

\end{document}